\documentclass{article}




    \usepackage[final]{neurips_2021}


\usepackage[utf8]{inputenc} 
\usepackage[T1]{fontenc}    
\usepackage{hyperref}       
\usepackage{url}            
\usepackage{booktabs}       
\usepackage{amsfonts}       
\usepackage{nicefrac}       
\usepackage{microtype}      
\usepackage{xcolor}         

\usepackage{wrapfig,lipsum,booktabs}
\usepackage{graphicx}
\usepackage{wrapfig}
\usepackage{subcaption}
\usepackage{thmtools,thm-restate}
\usepackage{algorithmic}
\usepackage{algorithm}
\usepackage{amssymb,amsmath}
\usepackage{amsthm}

\newtheorem{assumption}{Assumption}
\usepackage{bm}
\def\given{\middle\vert}

\def\expectation{\mathbb{E}}

\def\defeq{\dot=}
\newcommand{\deriv}[2][]{\frac{\partial#1}{\partial#2}}

\title{Flexible Option Learning}

\author{
    Martin Klissarov \\
    Mila, McGill University\\
    \texttt{martin.klissarov@mail.mcgill.ca} \\
    \And
    Doina Precup \\
    Mila, McGill University and DeepMind\\
    \texttt{dprecup@cs.mcgill.ca} \\
}

\begin{document}

\maketitle

\begin{abstract}
Temporal abstraction in reinforcement learning (RL), offers the promise of improving generalization and knowledge transfer in complex environments, by propagating information more efficiently over time. Although option learning was initially formulated in a way that allows updating many options simultaneously, using off-policy, intra-option learning (Sutton, Precup \& Singh, 1999), many of the recent hierarchical reinforcement learning approaches only update a single option at a time: the option currently executing. We revisit and extend intra-option learning in the context of deep reinforcement learning, in order to enable updating all options consistent with current primitive action choices, without introducing any additional estimates. Our method can therefore be naturally adopted in most hierarchical RL frameworks. When we combine our approach with the option-critic algorithm for option discovery, we obtain significant improvements in performance and data-efficiency across a wide variety of domains. 
\end{abstract}

\section{Introduction}
Temporal abstraction is a fundamental component of intelligent agents as it allows for explicit reasoning at different timescales. The options framework \citep{sutton1999between} provides a clear formalism for such abstractions and proposes  efficient ways to learn directly from the environment's reward signal. As the agent needs to learn about the value of a possibly large number of options, it is crucial to maximize each interaction to ensure sample efficiency. \cite{sutton1999between} therefore propose the intra-option value learning algorithm, which updates all \textit{consistent} options simultaneously from a single transition.

Recently there have been important developments on how to learn or discover options from scratch when using function approximation \citep{oc,iopg,abstract_options,bagaria2019option,dac,DBLP:conf/aaai/HarutyunyanVBPN18}. However, most of recent research only updates a single option at a time, that is, the option  selected by the agent in the sampled state. This is perhaps due to the fact that consistency between options is less likely to arise naturally when using function approximation. 

 We propose a scalable method to better exploit the agent's experience by updating all relevant options, where relevance is defined as the likelihood of the option being selected. We present a decomposition of the state-option distribution which we leverage to remain consistent in the function approximation setting while providing performance improvements similarly to the ones shown in the tabular setting \citep{sutton1999between}. 
 
 When an option set is given to the agent and only the value of each option remains to be learned, updating all relevant options lets the agent determine faster when to apply each of them. In the case where  all options components are learned from scratch, our approach can additionally help  mitigate the issue of degenerate solutions \citep{delibcost}. Indeed, such solutions are likely due to the "rich-get-richer" phenomenon where an updated option has more chance of being picked again when compared to a randomly initialized option. By avoiding degenerate solutions, one can obtain temporally extended options which leads to meaningful and interpretable behavior.

Unlike recent approaches that leverage inference to update all options \citep{daniel2016probabilistic,iopg,ho2}, our method is naturally compatible with many state of the art policy optimization frameworks \citep{a3c,ppo,td3} and requires no additional estimators. We empirically verify the merits of our approach on a wide variety of domains, ranging from gridworlds using tabular representations \citep{sutton1999between,oc},  control with linear function approximation \citep{Moore-1991-13223}, continuous control \citep{conf/iros/TodorovET12,BrockmanCPSSTZ16} and vision-based navigation \citep{gym_miniworld}.

\section{Related Work}
The discovery of meaningful temporally extended behaviour has previously been explored through many different avenues. One such avenue is by learning to reach goals in the environment . This can be done in various ways such as identifying bottleneck states \citep{mcgovern2001automatic,simsek,menache2002q,machado2017laplacian} or through feudal RL \citep{feudalrl,vezhnevets2017feudal,hiro}.  

Another avenue of research is through the options framework \citep{sutton1999between} where the discovery of temporal abstractions can be done directly through the environment's reward signal. This framework has lead to recent approaches \citep{levy2011unified,oc,delibcost,abstract_options,noc} that rely on policy gradient methods to learn the option policies. As policy gradient methods are derived for the on-policy case, these approaches have only updated the sampled option within a state. Our work complements them by providing a way to transform single-option updates into updates that consider all relevant options without introducing any additional estimators. 

Within the options framework, \cite{DBLP:conf/aaai/HarutyunyanVBPN18} have recently decoupled the way options terminate from the target termination functions. This allowed for learning longer options as if their duration was less extended in time. This can be beneficial as shorter options are shown to produce better solutions and longer options are preferred for planning \citep{Mann2015ApproximateVI}. By contrast, our approach looks at updating all relevant options through the experience generated by any such option, irrespective of the duration.

Another line of work is proposed by \cite{daniel2016probabilistic} in which options are considered latent unobserved variables which allows for updating all options. The authors adopt an expectation-maximization approach which assumes a linear structure of the option policies. \cite{iopg} alleviate this assumption and derive a policy gradient objective that improves the data-efficiency and interpretability of the learned options. A conceptually related work is proposed by \cite{ho2} which leverages dynamic programming to infer option and action probabilities in hindsight. As these approaches rely on inference methods to perform updates, it is not directly compatible with other option learning methods nor with various policy optimization algorithms, which is another important feature of our approach.

\section{Background and Notation}
We assume a Markov Decision Process $\mathcal{M}$, defined as a tuple $\langle \mathcal{S}, \mathcal{A}, r, P \rangle$ with a finite state space $\mathcal{S}$, a finite action space $\mathcal{A}$, a scalar reward function $r : \mathcal{S} \times \mathcal{A} \to Dist(\mathbb{R})$ and a transition probability distribution $P: \mathcal{S} \times \mathcal{A} \to Dist(\mathcal{S})$. A policy $\pi:\mathcal{S} \to Dist(\mathcal{A})$ specifies the agent's behaviour, and the value function is the expected discounted return obtained by following $\pi$ starting from any state: $V_\pi(s) \defeq \expectation_\pi\left[ \sum_{i=t}^\infty \gamma^{t-i} r(S_i, A_i) \given S_t = s\right]$, where $\gamma \in [0,1)$ is the discount factor.
The policy gradient theorem \citep{Sutton1999} provides the gradient of the expected discounted return from an initial state distribution $d(s_0)$ with respect to a parameterized stochastic policy $\pi_\zeta(\cdot|s)$: $\deriv[J(\zeta)]{\zeta} = \sum_{s} d^{\gamma}(s) \sum_{a} \deriv[\pi_{\zeta}\left(a \given s\right)]{\zeta}Q_{\pi}(s, a)$
where  $d^{\gamma}(s) = \sum_{s_0} d(s_0) \sum_{t=0}^{\infty} \gamma^t P^{\pi}(S_t = s | S_0 = s_0)$ is the discounted state occupancy measure. 

\textbf{Options.} The options framework \citep{sutton1999between} provides a formalism for abstractions over time. An option within the option set $\mathcal{O}$ is composed of an intra-option policy $\pi(a|s,o)$ for selecting actions, a termination function $\beta(s,o)$ to determine how long an option executes and an initiation set $I(s,o)$ to restrict the states in which an option may be selected. It is then possible to choose which option to execute through the policy over options $\mu: \mathcal{S} \to Dist(\mathcal{O})$. The set of options along with the policy over options defines an option-value function which can be written recursively as,
\begin{align*}
Q_{\mathcal{O}}(s,o)
&\defeq \sum_a \pi(a|s,o) r(s,a) + \gamma \sum_{s',o'} P_{\pi,\mu,\beta}(s',o' | s,o) Q_{\mathcal{O}}(s',o')
\end{align*}
where $P_{\pi,\mu,\beta}(s',o' | s,o)$ is the probability of transitioning from $(s,o)$ to any other state-option pair in one step in the MDP. This probability is defined as,
\begin{align*}
    P_{\pi,\mu,\beta}(s',o' | s,o) \defeq  p_{\mu,\beta}(o'|s',o) \sum_a P(s'|s,a) \pi(a|s,o)
\end{align*}
where $ p_{\mu,\beta}(o'|o,s') = \beta(s',o) \mu(o'|s') + (1-\beta(s',o)) \delta_{o,o'}$ is the upon-arrival option probability and $\delta$ is the Kronecker delta. Intuitively, this distribution considers whether or not the previous option terminates in $s'$, and accordingly either chooses an option or continues with the previous one.

\section{Learning Multiple Options Simultaneously}
In this work we build upon the \textit{call-and-return} option execution model. Starting from a state $s$, the agent picks an option according to the policy over options $o \sim \mu(\cdot|s)$. This option then dictates the action selection process through the intra-option policy $a \sim \pi(\cdot|s,o)$. Upon arrival at the next state $s'$, the agent queries the option's termination function $\beta(s',o)$ to decide whether the option continues or a new option is to be picked. This process then continues until the episode terminates. 

This execution model, coupled with the assumption that the options are Markov, will let us leverage a notion of how likely an option is to be selected, irrespective of the actual sampled option. This measure will then be used to weight the updates of each possible option. 

\subsection{Option Evaluation}

In this section we present an algorithm that updates simultaneously the option value functions of all the options, written as  $Q_{\mathcal{O}}(s,o;\theta)$ and parameterized by $\theta$. Let's recall the on-policy one-step TD update for state-option pairs,
\begin{align}
    \theta_{t+1} &= \theta_{t} + \alpha \big( R_{t+1} + \gamma \hat{Q}(S_{t+1},O_{t+1})  - \hat{Q}(S_{t},O_{t}) \big) \nabla_{\theta_t}\hat{Q}(S_{t},O_{t}) \notag\\
    \theta_{t+1} &= \theta_{t} + \alpha \big( R_{t+1} + \gamma \theta_{t}^{\top} \phi(S_{t+1},O_{t+1})  - \theta_{t}^{\top}  \phi(S_t,O_t) \big)\phi(S_t,O_t)\notag\\
    &=\theta_{t} + \alpha \big( \underbrace{ R_{t+1}\phi(S_t,O_t)}_{\textbf{b}_t} + \underbrace{ \phi(S_t,O_t)(   \phi(S_t,O_t) - \gamma  \phi(S_{t+1},O_{t+1}) )^{\top}}_{\textbf{A}_{t}} \theta_{t}   \big) \notag\\
   \theta_{t+1} &= \theta_t + \alpha( \textbf{b}_t - \textbf{A}_{t}\theta_t)
\end{align}
where the option value functions are  linear functions, i.e. $Q_{\mathcal{O}}(s,o;\theta)\defeq \theta_t^{\top}\phi(s,o)$ of the parameters $\theta_t$ at time $t$ and the features $\phi(s,o)$. Since the random variables within the update rule, summarized through the vector $\textbf{b}_t$ and the matrix $\textbf{A}_{t}$, are sampled from interacting with the environment, the stability and convergence properties can be verified by inspecting its expected behaviour in the limit, written as,
\begin{align*}
    \theta_{t+1} 
    &= \theta_t + \alpha( \textbf{b} - \textbf{A}\theta_t)
\end{align*}
where $\textbf{A} = \lim_{{t} \rightarrow \infty} \mathbb{E}[\textbf{A}_{t}]$ and $\textbf{b} = \lim_{{t} \rightarrow \infty} \mathbb{E}[\textbf{b}_{t}]$. As the matrix $\textbf{A}$ has a multiplicative effect on the parameters, it will be crucial for this matrix to be positive-definite in order to assure the stability of the algorithm, and subsequently its convergence. In the on-policy case, this matrix is written as:
\begin{align*}
    \textbf{A} 
    &=  \sum_{s,o} \lim_{t \rightarrow \infty} P_{\pi,\mu,\beta}(S_t=s,O_t=o) \phi(s,o) \bigg( \phi(s,o) - \gamma \sum_{s',o'} P_{\pi,\mu,\beta}(s',o'|s,o) \phi(s',o') \bigg)^{\top}
\end{align*}
where $P_{\pi,\mu,\beta}(S_t=s,O_t=o)$ is the probability of being in a particular state-option pair at time $t$. In order to update all options, we will exploit the form of this state-option distribution to preserve the positive definite property of the matrix $\textbf{A}$. In the following lemma we first show that this limiting distribution exists under some assumptions and we derive a useful decomposition. To do so, we introduce the following two assumptions. Regarding Assumption 1, it is common in standard RL to assume that the policy induces ergodicity on the state space, which we extend to all option policies. Assumption 2 is specific to the hierarchical setting and is a way to ensure irreducibility, that is, that each state-option pair is reachable. It would be possible to find less strict assumptions by having access to more information about how the MDP is constructed.

\begin{assumption}
The option policies $\{ \pi(a|s,o) : \forall o \in \mathcal{O} \} $ each induce an ergodic Markov chain on the state space $\mathcal{S}$.
\label{ergod}
\end{assumption}
\begin{assumption}
The termination function $\beta(s,o)$ and policy over options $\mu(o|s)$ have strictly positive probabilities in all state-option pairs.
\label{positive}
\end{assumption}

\begin{restatable}{lemma}{ergo}
Under assumptions \ref{ergod}-\ref{positive} the following limit exists,
\begin{align}
    \lim_{t \rightarrow \infty} P_{\pi,\mu,\beta}(S_t=s,O_t=o|S_0,O_0) = d_{\pi,\mu,\beta}(s,o)
    \label{limit}
\end{align}
and the augmented chain process is an ergodic Markov chain in the augmented state-option space. Moreover the stationary distribution can be decomposed in the following form,
\begin{align}
d_{\pi,\mu,\beta}(s,o) = \sum_{\bar{o}} \bar{d}_{\pi,\mu,\beta}(\bar{o},s) p_{\mu,\beta}(o|s,\bar{o})
\label{decomp}
\end{align}
where $p_{\mu,\beta}(o|s,\bar{o})$ is the probability over options upon arriving in a state.
\end{restatable}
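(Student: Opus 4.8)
The plan is to treat the pair process $(S_t, O_t)$ as a Markov chain on the finite augmented space $\mathcal{S}\times\mathcal{O}$ with one-step kernel $P_{\pi,\mu,\beta}(s',o'|s,o)$, and to establish existence of the limit in \eqref{limit} via the standard ergodic theorem for finite chains, i.e.\ by verifying irreducibility and aperiodicity. For irreducibility I would argue as follows: starting from any $(s,o)$, Assumption~\ref{ergod} guarantees that the state chain induced by $\pi(\cdot|\cdot,o)$ reaches any target state $s'$ in finitely many steps with positive probability while $o$ remains active; upon arrival at $s'$, Assumption~\ref{positive} ensures $\beta(s',o)>0$ and $\mu(o'|s')>0$, so any option $o'$ is selected with positive probability. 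Composing these two facts shows every $(s',o')$ is reachable from every $(s,o)$. Aperiodicity follows from the aperiodicity already built into each ergodic state chain of Assumption~\ref{ergod} together with the positive self-transition probabilities permitted by Assumption~\ref{positive}. Finiteness of $\mathcal{S}\times\mathcal{O}$ then yields a unique stationary distribution $d_{\pi,\mu,\beta}$ to which the chain converges from any initial $(S_0,O_0)$, proving \eqref{limit}.

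For the decomposition \eqref{decomp}, the plan is to start from the stationary balance equation $d_{\pi,\mu,\beta}(s,o)=\sum_{s',\bar{o}}P_{\pi,\mu,\beta}(s,o|s',\bar{o})\,d_{\pi,\mu,\beta}(s',\bar{o})$ and substitute the definition $P_{\pi,\mu,\beta}(s,o|s',\bar{o})=p_{\mu,\beta}(o|s,\bar{o})\sum_a P(s|s',a)\pi(a|s',\bar{o})$. The crucial observation is that the upon-arrival factor $p_{\mu,\beta}(o|s,\bar{o})$ depends on the history only through the arriving state $s$ and the option $\bar{o}$ that was executing during the arrival, and in particular does not depend on the previous state $s'$. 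This lets me pull $p_{\mu,\beta}(o|s,\bar{o})$ outside the sum over $s'$ and collect the remaining terms into
\[
\bar{d}_{\pi,\mu,\beta}(\bar{o},s)\defeq\sum_{s'}\Big(\sum_a P(s|s',a)\pi(a|s',\bar{o})\Big)d_{\pi,\mu,\beta}(s',\bar{o}),
\]
which is exactly the marginal probability of arriving in state $s$ with $\bar{o}$ the option active on the incoming transition. The balance equation then reads $d_{\pi,\mu,\beta}(s,o)=\sum_{\bar{o}}\bar{d}_{\pi,\mu,\beta}(\bar{o},s)\,p_{\mu,\beta}(o|s,\bar{o})$, which is precisely \eqref{decomp}.

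The main obstacle I anticipate is conceptual rather than computational: correctly identifying the quantity $\bar{d}_{\pi,\mu,\beta}(\bar{o},s)$ as the pre-arrival marginal and justifying that the option-selection step factors cleanly out of the balance equation. The factorization hinges on the call-and-return semantics encoded in $p_{\mu,\beta}$, namely that conditioned on the arriving state $s$ and the incoming option $\bar{o}$, the choice of the next option is independent of where the agent came from. Verifying irreducibility rigorously --- in particular that composing the ``reach $s'$ under $o$'' and ``switch to $o'$ at $s'$'' steps genuinely produces a positive-probability path in the \emph{augmented} chain, rather than only in its marginals --- is the one place where both assumptions must be used in tandem and where care is needed to keep the argument honest.
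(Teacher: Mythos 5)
Your proposal is correct and takes essentially the same route as the paper: ergodicity of the augmented chain on $\mathcal{S}\times\mathcal{O}$ via irreducibility and aperiodicity from Assumptions \ref{ergod}--\ref{positive}, then the decomposition by factoring the upon-arrival distribution $p_{\mu,\beta}(o|s,\bar{o})$ out of the one-step equation --- the paper phrases this second step as taking $t\to\infty$ in $P_{\pi,\mu,\beta}(S_t=s,O_t=o)=\sum_{\bar{o}}P_{\pi,\mu,\beta}(S_t=s,O_{t-1}=\bar{o})\,p_{\mu,\beta}(o|s,\bar{o})$, which is equivalent to your stationary-balance argument, your pre-arrival marginal $\bar{d}_{\pi,\mu,\beta}(\bar{o},s)$ being exactly the paper's $\lim_{t\to\infty}P_{\pi,\mu,\beta}(S_t=s,O_{t-1}=\bar{o})$. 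One small patch to your irreducibility sketch: Assumption \ref{positive} guarantees $\beta>0$ but not $\beta<1$, so you cannot require that the option literally never terminates along the path; instead observe that the label $O_t=o$ is retained at each intermediate state $s''$ with probability $(1-\beta(s'',o))+\beta(s'',o)\mu(o|s'')>0$ (continue, or terminate and immediately re-select $o$), after which your composed path argument in the augmented chain goes through verbatim.
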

\textbf{Remarks.} In the derivation we have avoided mentioning the concept of initiation sets such that we would not encumber the notation and presentation. The derivation would be identical if we had assumed that the initation set for each option is equal to the state space, which has been done for most of recent work on options \citep{oc}. Another way to reconcile our method with the original options framework would be to consider interest functions \citep{ioc}, which are a generalization of the concept of initiation sets. We could then add an assumption similar to Assumption 2 on the interest function by only allowing positive values. This would let us heavily favor a subset of options for a certain region of the state space. However, this assumption would still preclude us from using hard constraints on the option set, such as option affordances \citep{khetarpal2021temporally}. More empirical evidence would be necessary to investigate whether this is a limitation in practice.

Ideally, if we seek to update all options in a state, we could weight each of these updates by the stationary distribution $d_{\pi,\mu,\beta}(s,o)$. However, in most case, it is impractical to assume access to this distribution. A possible solution would be to estimate this quantity by making use of recent approaches \citep{hallak2017consistent,xie,gendice}. Instead, we propose to leverage the decomposition presented in Equation \ref{decomp}. 

Indeed, this decomposition highlights the dependency on the upon-arrival option distribution, $p_{\mu,\beta}(o|s,\bar{o})$, which is a known quantity as it is only dependent of the agent's policy over options $\mu(o|s)$ and termination function $\beta(s,\bar{o})$. Although $\bar{d}_{\pi,\mu,\beta}(s,\bar{o})$ is itself unknown, we can collect its samples by adopting the call-and-return option execution model. More concretely, under this model, an option which is previously selected leads us to a state of interest at time $t$. These random variables, being sampled on-policy, follow the distribution $\bar{d}_{\pi,\mu,\beta}(s,\bar{o})$. Upon arriving in any state of interest $S_t$ with option $O_{t-1}$, although an option $O_t$ will be sampled according to call-and-return, we can use the distribution $p_{\mu,\beta}(O_t=o|S_t,O_{t-1})$ to estimate how likely it would have been for each option $o$ to be selected. The update rule would become,
\begin{align}
    \theta_{t+1} &= \theta_{t} + \alpha \sum_{o} p_{\mu,\beta}(O_t=o|S_t,O_{t-1}) \big( \expectation[U_t | S_t, O_t=o] - \theta_{t}^{\top}  \phi(S_t,O_t=o) \big)\phi(S_t,O_t=o)
    \label{eval_update}
\end{align}
where $\expectation[U_t | S_t=s, O_t=o]$ is the target for a given state and option pair. In Section 4.3 we show how to obtain this quantity without using any additional estimation or simulation-resetting. Our update rule would then preserve the positive-definite property of the matrix \textbf{A}.


\subsection{Option Control}
In this section we extend the approach from evaluating the option value functions to the case where we seek to learn all option policies simultaneously. To do so we build upon the option-critic (OC) update rules derived by \cite{oc}. Let us first recall the intra-option policy gradient,
\begin{align}
    \sum_{s,o} d^{\gamma}_{\pi,\mu,\beta}(s,o) \sum_a \frac{\partial  \pi_{\zeta}(a| s, o)}{\partial \zeta} Q_{\mathcal{O}}(s,o,a) 
    \label{intra_option}
\end{align}
where, $d^{\gamma}_{\pi,\mu,\beta}(s,o)=\sum_t \gamma^t P_{\pi,\mu,\beta}(S_t=s,O_t=o)$, is the $\gamma$-discounted occupancy measure over state-option pairs. 
As in the previous section, we wish to adopt the same call-and-return option execution model, in particular as it is also used by the option-critic algorithm in order to perform stochastic gradient updates following Equation \ref{intra_option}. When updating all options within a state $s$, we notice that we can leverage the structure of distribution state-option occupancy measure,
\begin{align}
    d^{\gamma}_{\pi,\mu,\beta}(s,o)= \sum_{\bar{o}} 
    p_{\mu,\beta}(o|s,\bar{o}) \sum_{\bar{s}}
    p_{\pi}(s|\bar{s})  d^{\gamma}_{\pi,\mu,\beta}(\bar{s},\bar{o})
\end{align}

By expanding the state-occupancy measure we can separate unknown probabilities, that is the next state distribution $p_{\pi}(s|\bar{s})$ and the previous state-option pair $  d^{\gamma}_{\pi,\mu,\beta}(\bar{s},\bar{o})$, from the known upon-arrival option distribution, that is $ p_{\mu,\beta}(o|s,\bar{o})$. Executing Markov options in a call-and-return fashion will then provide us samples from the unknown distributions, and for each sampled $(S_t,O_{t-1})$-pair we can weight the likelihood of choosing option $o$ at time $t$. Concretely, this translates into the following update rule,
\begin{align}
\zeta_{t+1}
     = \zeta_t + \alpha_{\zeta} \sum_{o} p_{\mu,\beta}(O_t=o|S_t,O_{t-1})  \frac{ \log \partial  \pi_{\zeta}(A_t| S_t, O_t=o)}{\partial \zeta_t} Q_{\mathcal{O}}(S_t,O_t=o,A_t)
    \label{learn_update}
\end{align}

We highlight that the update rules for option control do not rely on Assumption 1 and 2, as these were specific to the evaluation setting.
By updating all options within a state, we can expect better sample efficiency and final performance. Moreover, this may also contribute to reducing variance. Indeed, the update rule in Equation \ref{learn_update} can be seen as a variation on the all-action policy gradient \citep{allaction,petit}, which updates all actions applied to the distribution of options. Such all-action or all-option updates can be derived as applications of the extended \citep{bratley} form of conditional Monte-Carlo estimators \citep{condmontecarlo,CIS}, which provide a conditioning effect on the updates that tend to reduce the variance. As we will see in our experiments, such variance reduction can be observed, especially with tabular and linear function estimators.

It is possible to combine evaluation and control into a single method. We do so by instantiating these update rules within the option-critic (OC) algorithm which we present in Algorithm 1 in appendix A.1 and name it Multi-updates Option Critic (MOC) algorithm.

\subsection{Suitable targets}
We have proposed a way in which all relevant options can be updated simultaneously. In the case of option evaluation we relied on having access to appropriate targets $\expectation[U_t | S_t=s, O_t=o]$ for each option within a state. However, as we execute options in the call-and-return model and we do not assume access to the simulator for arbitrary resetting, we only have one sampled action for each state: the one taken by the executing option. To leverage this experience for any arbitrary option, we will leverage standard importance sampling \citep{is}. Using importance sampling naturally excludes the possibility that options be defined through deterministic policies, which can be seen as a limitation of our approach. That being said, importance sampling is not necessary for the option control algorithm. 

The importance sampling target for any arbitrary option $\tilde{o}$ given the action $a$ in state $s$ generated by the executing option $o$ is the following,
 \begin{align*}
    U^{\rho}_t(s,\tilde{o}) &= \dfrac{\pi(A_t=a|S_t=s,O_t=\tilde{o})}{\pi(A_t=a|S_t=s,O_t=o)} R_{t+1} + \dfrac{p_{\mu,\beta}(O_{t+1}=o'|S_{t+1}=s',O_t=\tilde{o})}{p_{\mu,\beta}(O_{t+1}=o'|S_{t+1}=s',O_{t}=o)}  \gamma   \theta_t^{\top} \phi(s',o')
\end{align*}
It is straightforward to show that this target is unbiased and corrects for the off-policyness introduced by updating any other option than the sampled one, leading to consistency \citep{sutton1999between}. It would be possible to consider different approaches to correct for difference between intra-option policies \citep{wis,li2015toward,qlambda,doublyrobust,jain} which we leave for future work.

\subsection{Controlling the number of updates}

A possible drawback of updating all options is the prospect of reducing the degree of diversity in the option set. A possible remedy would be to favor diversity, or similar measures, through reward shaping \citep{vic,diayn,pathak,rewardprop}. In this work, we instead introduce a hyperparameter $\eta$ which represents the probability of updating all options, as opposed to updating only the sampled option. In our experiments we will notice that high values of $\eta$ are not necessarily detrimental to the expressivity of the option set.

Another interesting property of this hyperparameter, which we do not leverage in this work, is that it could be use in a state dependent way (i.e. $\eta(s)$) without changing the stability of the algorithm. This generalization adds flexibility to how we want to update options. In particular, it could be a way to limit the variance of the algorithm by only performing such off-policy updates for state-option pairs where the importance sampling is below a threshold. 



\section{Experiments}
We now validate our method on a wide range of tasks and try to assess the following questions: (1) By updating all relevant options are we able to improve the sample-efficiency and final performance? (2) Does our approach induce temporally extended options without the need of regularization? (3) Is the learned option set diverse and useful?  


Most of our experiments are done in a transfer learning setting. Our method is compatible with any modern policy optimization framework and we validate this by building our hierarchichal implementations on top of the synchronous variant of the asynchronous advantage actor-critic algorithm \citep{a3c}, that is the A2C  algorithm, as well as the proximal policy optimization algorithm (PPO) \citep{ppo}. All hyperparameters are available in the appendix. Importantly, for each set of experiments we use the same value of $\eta$. For all experiments, the shaded area represents the $80 \%$ confidence interval.

We would also like to emphasize that although our implementation is based upon the option-critic algorithm \citep{oc}, our approach can be straightforwardly adapted to most hierarchical RL algorithms \citep{abstract_options,safeoptioncritic,covertime_options,dac,bagaria2019option}.

\subsection{Tabular Domain}

\textit{Experimental Setup:} 
We first evaluate our approach in the classic FourRooms domain \citep{sutton1999between} where the agent starts in a random location and has to navigate to the goal in the right hallway.
As we are interested in the algorithms' sample efficiency, we plot the number of steps taken per episodes.

In Figure \ref{fig:fixedoption_tab}, we work under the assumption that the options have been given and remain fixed throughout learning. The only element being updated is the option value functions, $Q_{\mathcal{O}}(s,o)$, which is used by the policy over options to select among options. We compare our approach, leveraging the multi-options update rule in Equation \ref{eval_update},  to the case of only updating the sampled option within a state. The options used are the 12 hallway options designed in \cite{sutton1999between} (see their Figure 2 for a brief summary). In this setting, our approach achieves significantly better sample efficiency, illustrating how our method can accelerate learning the utility of a set of well-suited options.

We now move to the case where all the options' parameters are learned and leverage the learning rule for control in Equation \ref{learn_update}. In this setting we compare our approach, multi-updates option critic (MOC), to the option-critic (OC) algorithm as well as a flat agent, denoted as actor-critic (AC). Both hierarchical agents outperform the flat baselines, however, our agent is able to reach the same performance as OC in about half the number of episodes. Moreover, we notice that MOC shows less variance across runs, which is in accordance with previous work investigating the variance reduction effect of expected updates \citep{allaction,petit}.

\begin{figure}[h]
    \centering
    \begin{subfigure}[c]{0.32\columnwidth}
    \centering
        \includegraphics[width=1.\columnwidth]{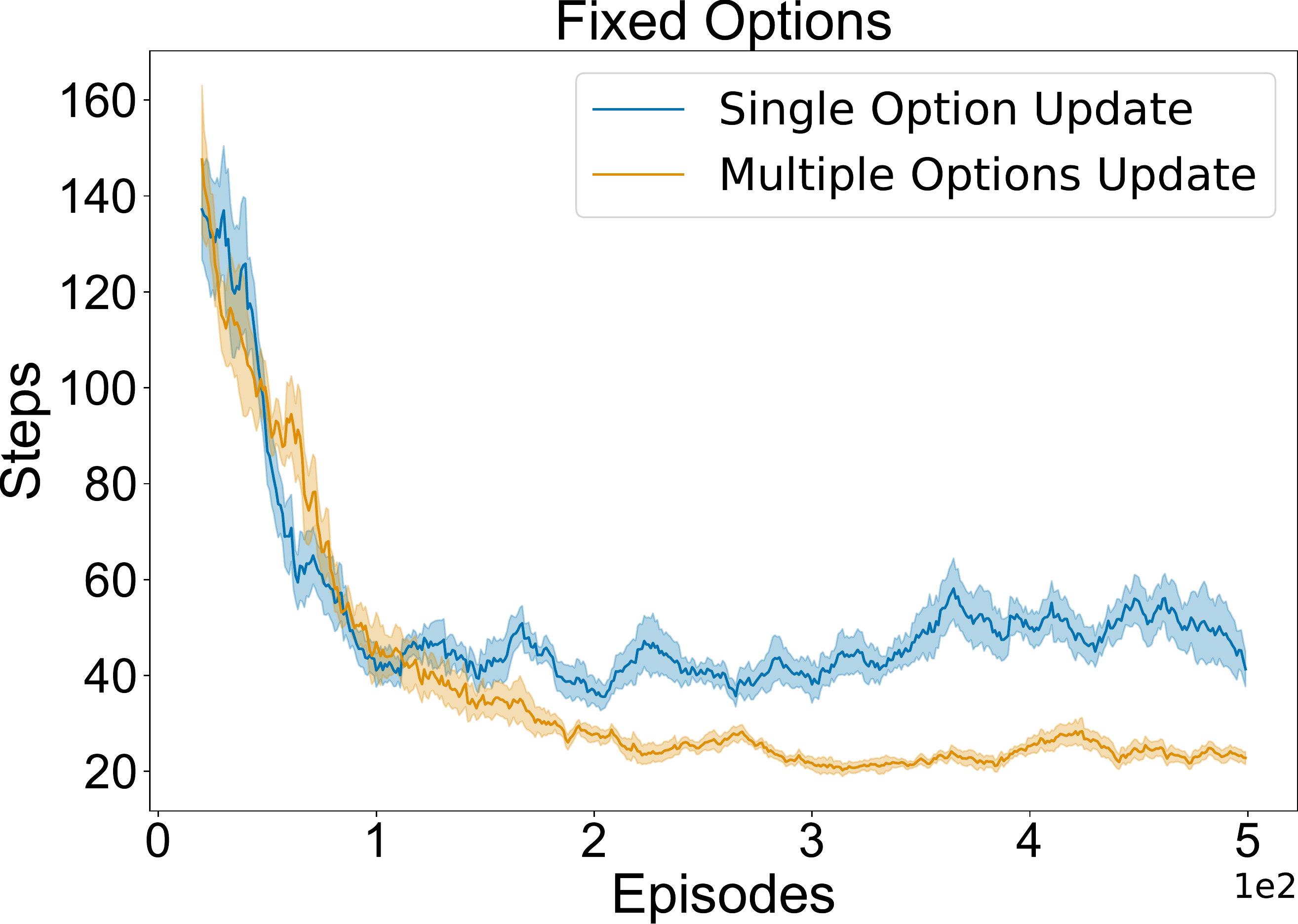}
        \caption[]
        {{\small }}
        \label{fig:fixedoption_tab}
    \end{subfigure}
    \begin{subfigure}[c]{0.32\columnwidth}
    \centering
        \includegraphics[width=1.\columnwidth]{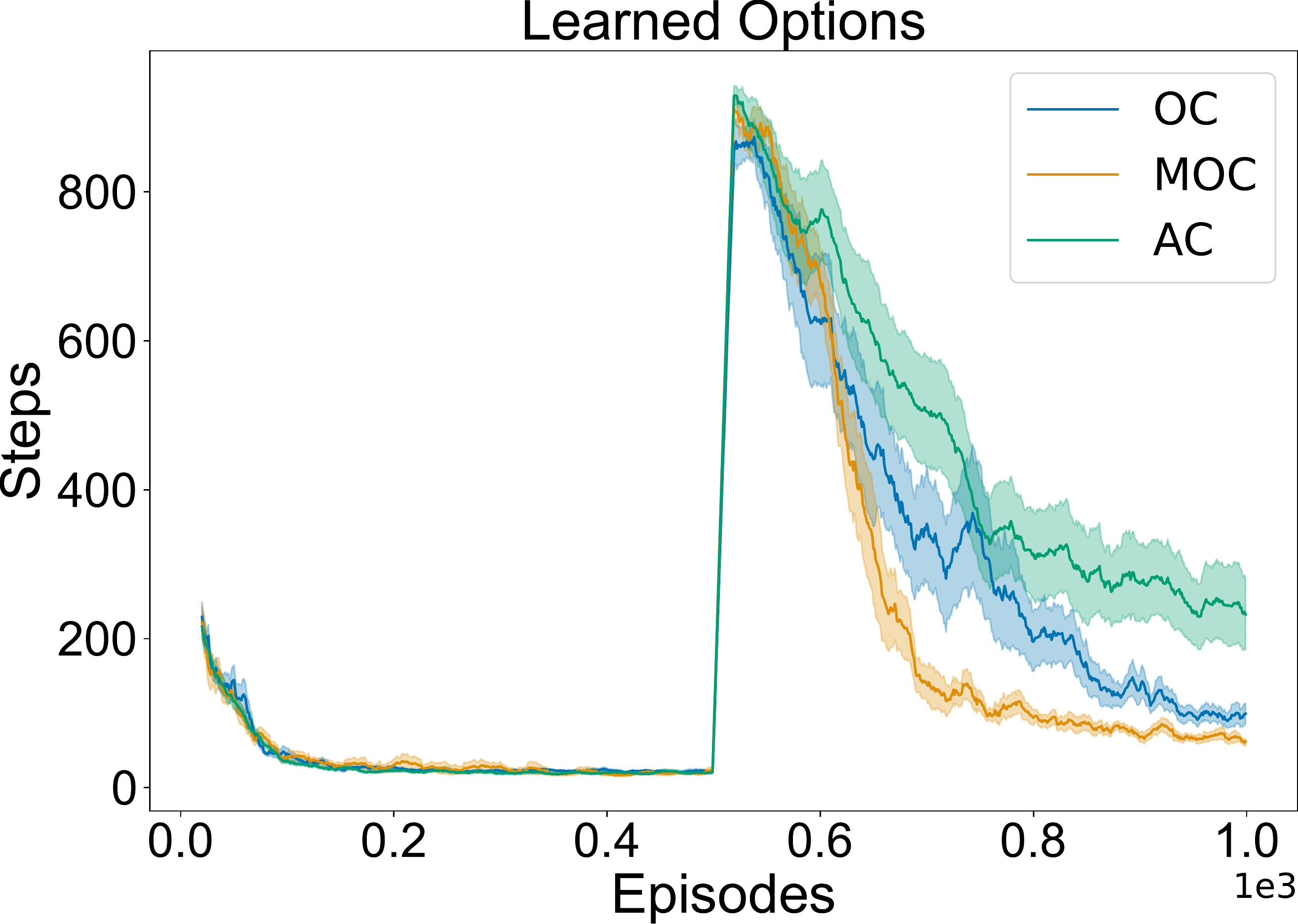}
        \caption[]
        {{\small }}
        \label{fig:learnedoption_tab}
    \end{subfigure}
    \begin{subfigure}[c]{0.32\columnwidth}
    \centering
        \includegraphics[width=1.\columnwidth]{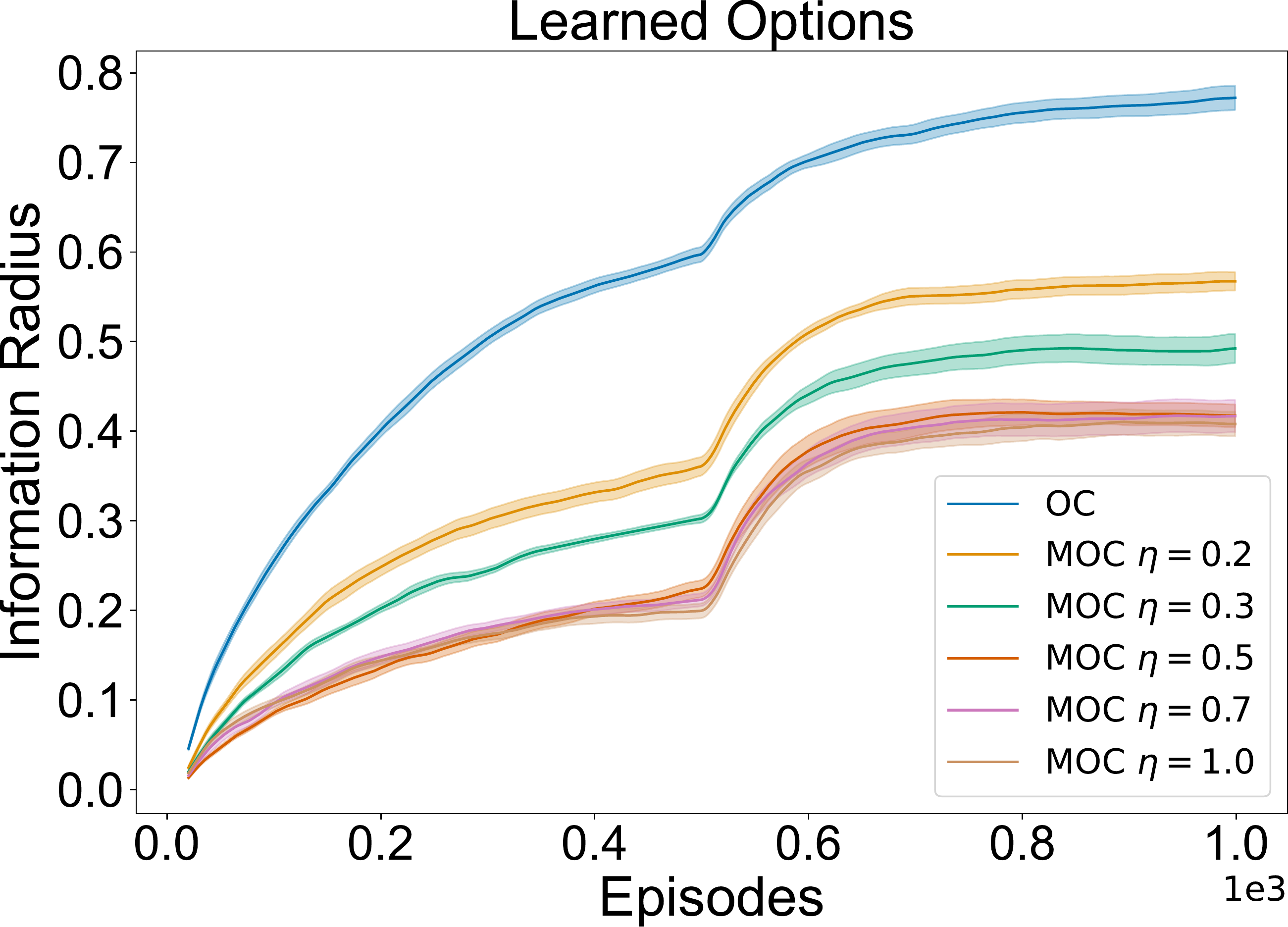}
        \caption[]
        {{\small }}
        \label{fig:info_radius}
    \end{subfigure} 
    \caption{\textbf{FourRooms domain.} In a) we used the fixed set of hallway options as defined in \cite{sutton1999between} and only learned the option-value functions. In b) all the parameters are learned from scratch before the goal location is change mid-training. In c) we plot the information radius between options as a measure of diversity. Results are averaged over 50 independent runs.  }
    \label{fig:tab}
\end{figure}

In Figure \ref{fig:visual_tab} in the appendix, we verify the kind of options that are being learned under both the option-critic and our approach. For each option we plot the best action in each state and we highlight in yellow whether the option is likely to be selected by the policy over options. We first notice that for the OC agent, there is an option that is selected much more often, whereas the MOC agent learns a good separation of the state space. Moreover, we notice in purple states (where the option is less likely to be chosen), the OC agent's actions do not seem to lead to a meaningful behaviour in the environment, which is in contrast to our approach. As a potential benefit, by sharing information between options, the MOC agent will tend to be more robust to the environment's stochasticity as well as to the learning process of the policy over options.

Finally, we also would like to point out a possible challenge in our approach that we have mentioned in Section 4. It is possible that by learning all options simultaneously we may reduce the diversity between options. We verify this in Figure \ref{fig:info_radius} where we plot the information radius \citep{sibson1969information} between options as a measure of diversity. We plot the OC agent as well as our approach, MOC, with different values of the $\eta$ hyperparameter which controls the degree to which we apply updates to all options. We notice indeed a clear different between the OC agent and our agent when $\eta=1.0$ (only updates to all options). However, intermediate values of $\eta$ provide an effective to increase the information radius. As in this experiment we use tabular representations, the difference between updated options and non-updated options is accentuated. In the function approximation case we will see that the concern about the diversity of options is mitigated.


\subsection{Linear Function Approximation}

\textit{Experimental Setup:} 
We now perform experiments in the case where the value function is estimated  through linear function approximation. The environment we consider is a sparse variant of the classic Mountain Car domain \citep{Moore-1991-13223,Sutton1998} where a positive reward is given upon reaching the goal. We consider the transfer setting where after half the total number of timesteps the gravity is doubled.

\begin{figure}[h]
    \centering
    \begin{subfigure}[c]{0.32\columnwidth}
    \centering
        \includegraphics[width=1.\columnwidth]{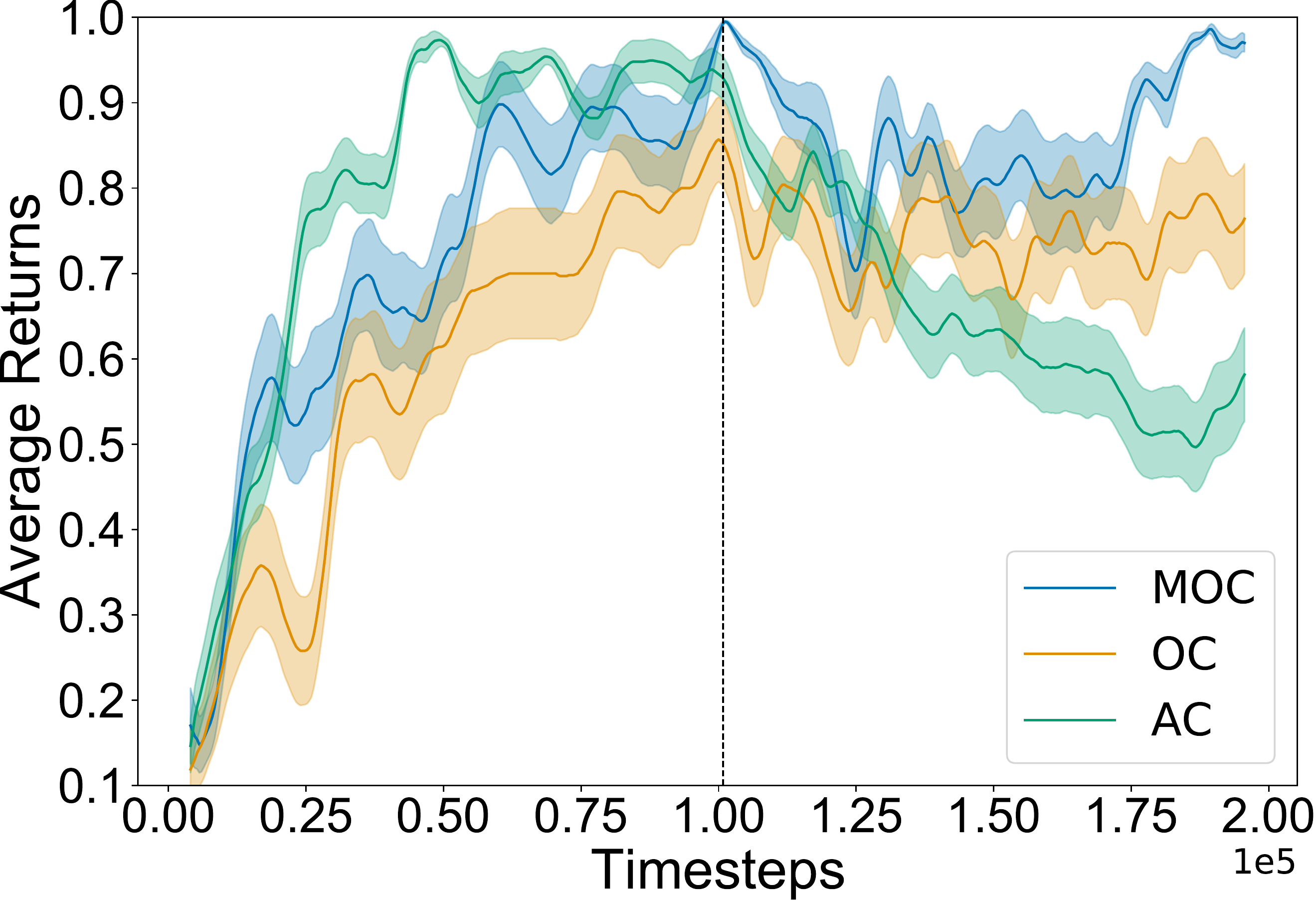}
        \caption[]
        {{\small }}
        \label{fig:lfa_res}
    \end{subfigure}
    \begin{subfigure}[c]{0.32\columnwidth}
    \centering
        \includegraphics[width=1.\columnwidth]{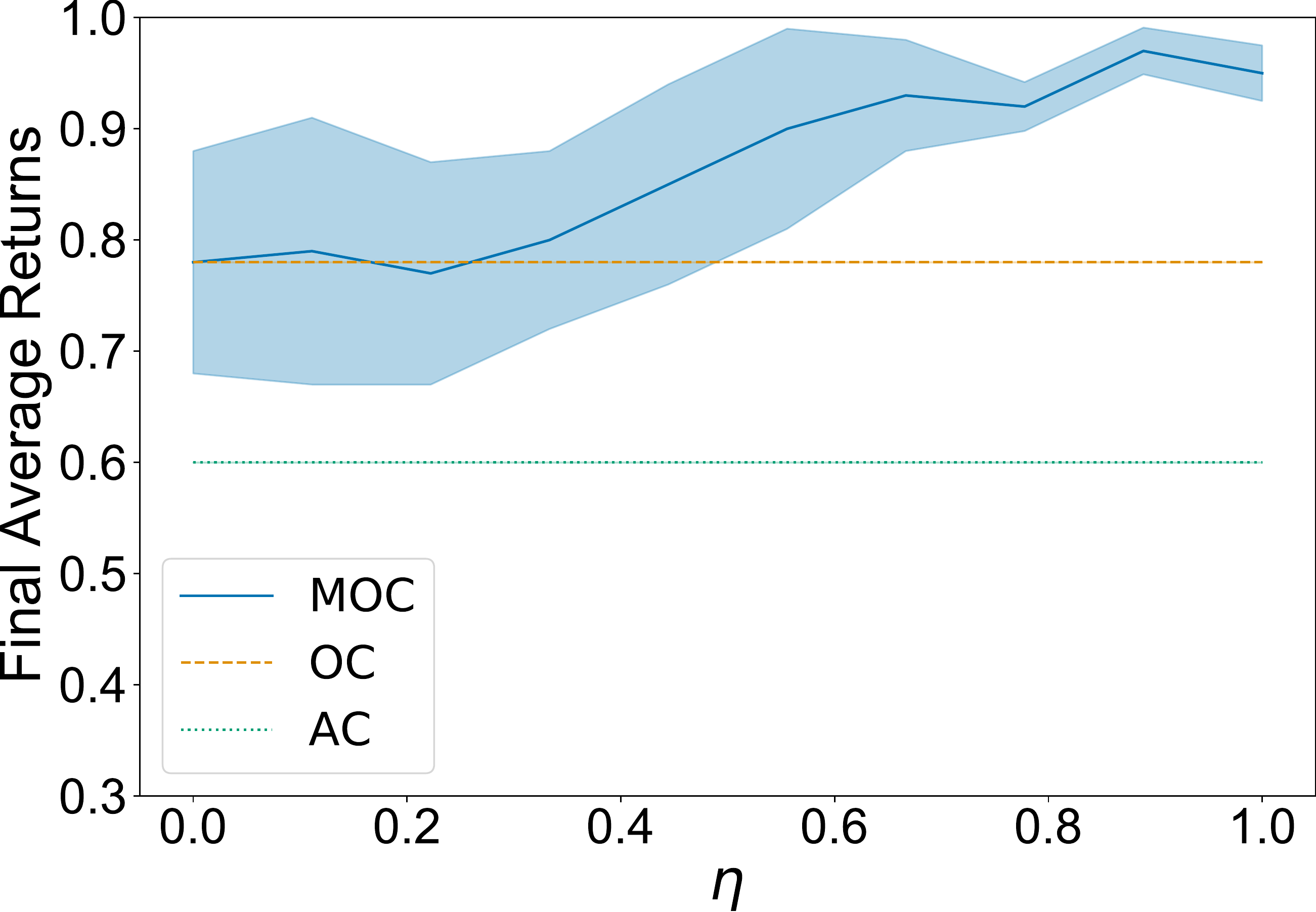}
        \caption[]
        {{\small }}
        \label{fig:lfa_eta}
    \end{subfigure}
    \begin{subfigure}[c]{0.32\columnwidth}
    \centering
        \includegraphics[width=1.\columnwidth]{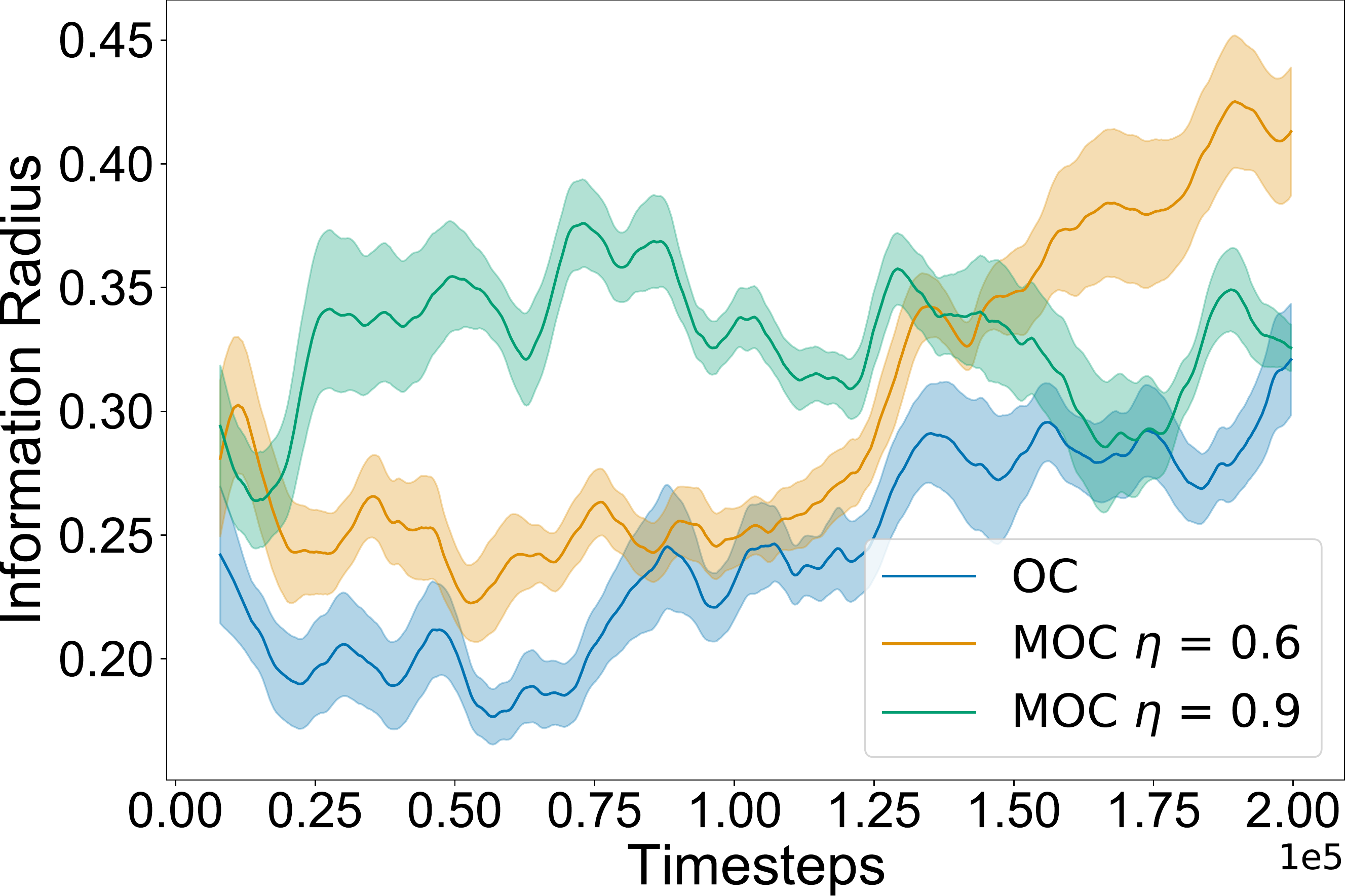}
        \caption[]
        {{\small }}
        \label{fig:lfa_info}
    \end{subfigure} 
    \caption{\textbf{MountainCar domain.} In a) we plot the performance of our agent, MOC, the option-critic and a flat baseline. The options are learned from scratch. At mid-training, we modify the gravity to double its original value. In b) we perform a hyperparamter study on the value of $\eta$ which controls the amount of updates to all options.  In c) we plot the information radius between options as a measure of expressivity and notice that higher values of $\eta$ do not affect it negatively. The performance is averaged across 20 independent runs. }
    \label{fig:all_lfa}
\end{figure}

In Figure \ref{fig:lfa_res}, we observe similar results to the tabular case, that is, hierarchical agents perform better than standard RL in the transfer task. MOC also improves upon OC's performance and sample efficiency. We verify how this improvement in performance translates with respect to the value of $\eta$, the hyperparameter controlling for the probability of updating all options. In Figure \ref{fig:lfa_eta} we can observe an almost monotonic relationship between $\eta$ and the algorithm's final performance, which confirms the effectiveness of our approach when using function approximation.



Finally, in Figure \ref{fig:lfa_info} we verify the information radius between options for different values of $\eta$. We notice that unlike the tabular case where higher values of $\eta$ lead to a smaller information radius, there is no clear correlation in this case. Perhaps surprisingly we notice that in the first task high values of $\eta$ lead to an increased information radius. This might indicate that at the beginning of training, the agent can learn to specialize faster by updating all options. 

\subsection{Experiments at Scale}

\subsubsection{Continuous Control}

\textit{Experimental Setup:} 
We turn our attention to the deep RL setting where all option components are estimated through neural networks. We first perform experiments in the MuJoco domain \citep{conf/iros/TodorovET12} where both states and actions are continuous variables. We build on classic tasks \citep{BrockmanCPSSTZ16} and modify them to make the learning process more challenging. The description of each task as well as a visual representation  is available in the appendix.

We leverage the PPO algorithm \citep{ppo} to build our hierarchical agents. We also compare our approach, MOC, to the interest option-critic (IOC) \citep{ioc} which has been shown to outperform the  option-critic algorithm derived for the continuous control setting \citep{delibcost,ppoc_klissarov}.  Additionally, we compare to a flat agent using PPO and to the inferred option policy gradient (IOPG) algorithm \citep{iopg}, an HRL algorithm also designed for updating all options by leveraging inference. 

\begin{figure}[h]
    \centering
    \begin{subfigure}[c]{0.32\columnwidth}
    \centering
        \includegraphics[width=1.\columnwidth]{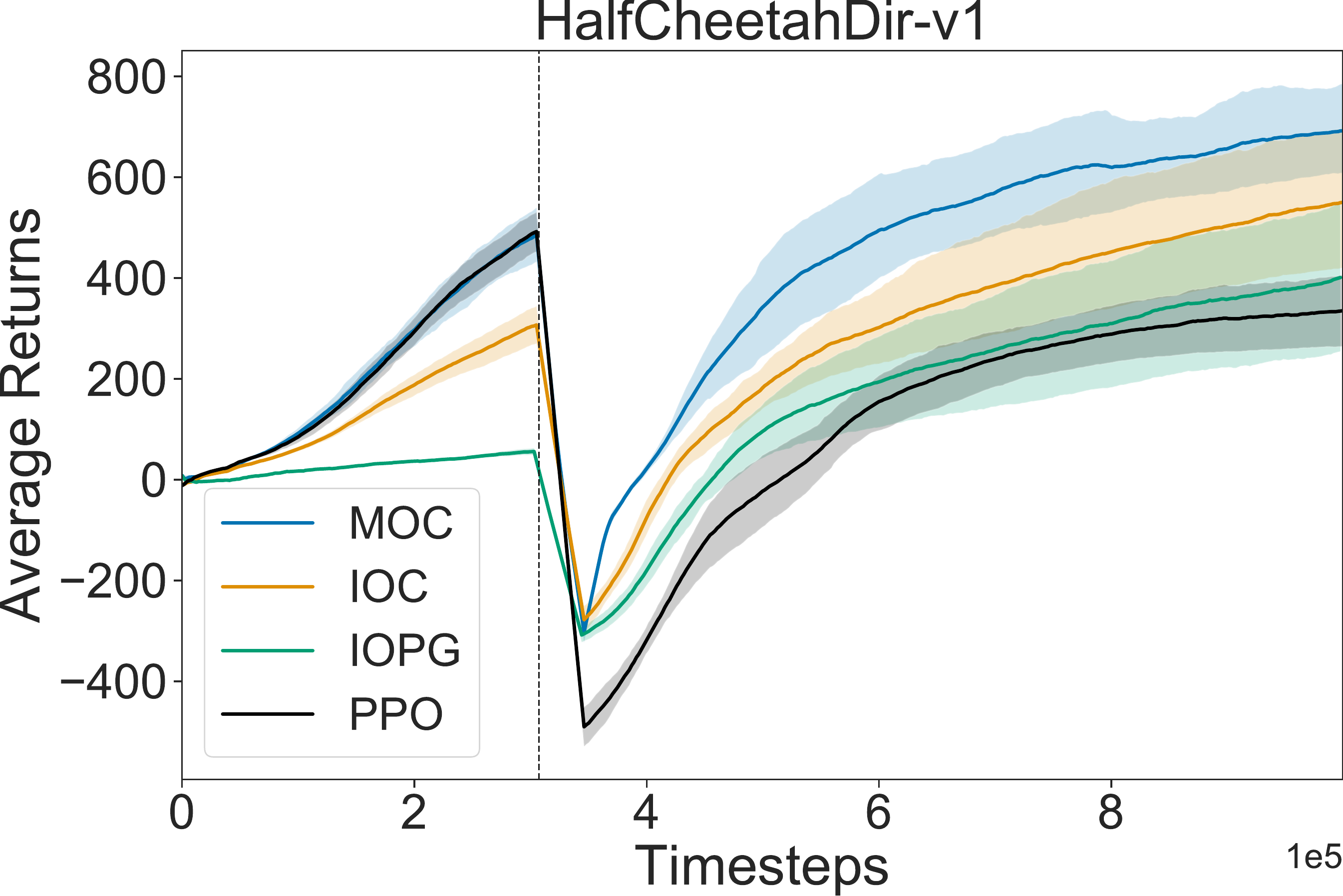}
        \caption[]
        {{\small }}
    \end{subfigure}
    \begin{subfigure}[c]{0.32\columnwidth}
    \centering
        \includegraphics[width=1.\columnwidth]{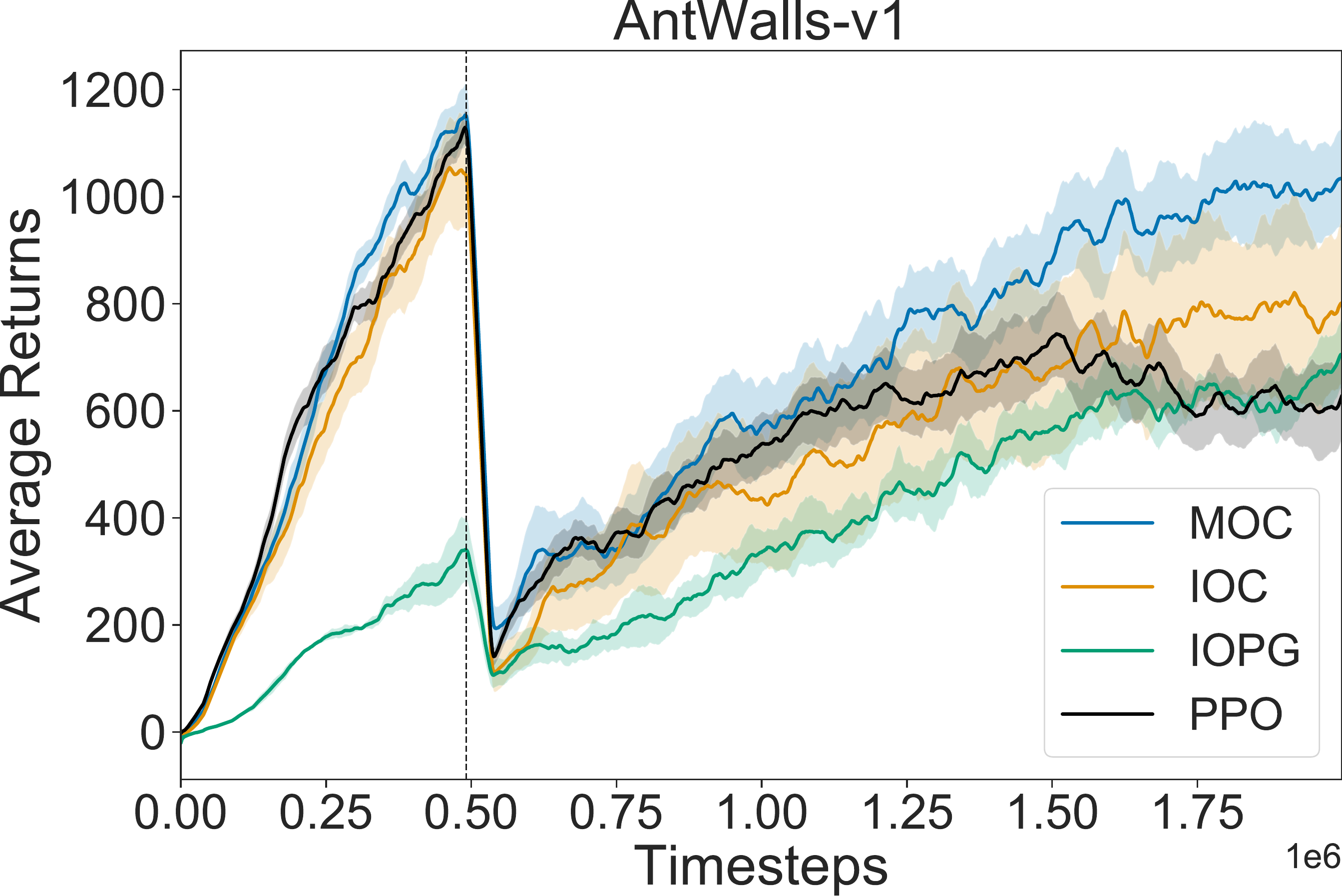}
        \caption[]
        {{\small }}
    \end{subfigure}
    \begin{subfigure}[c]{0.32\columnwidth}
    \centering
        \includegraphics[width=1.\columnwidth]{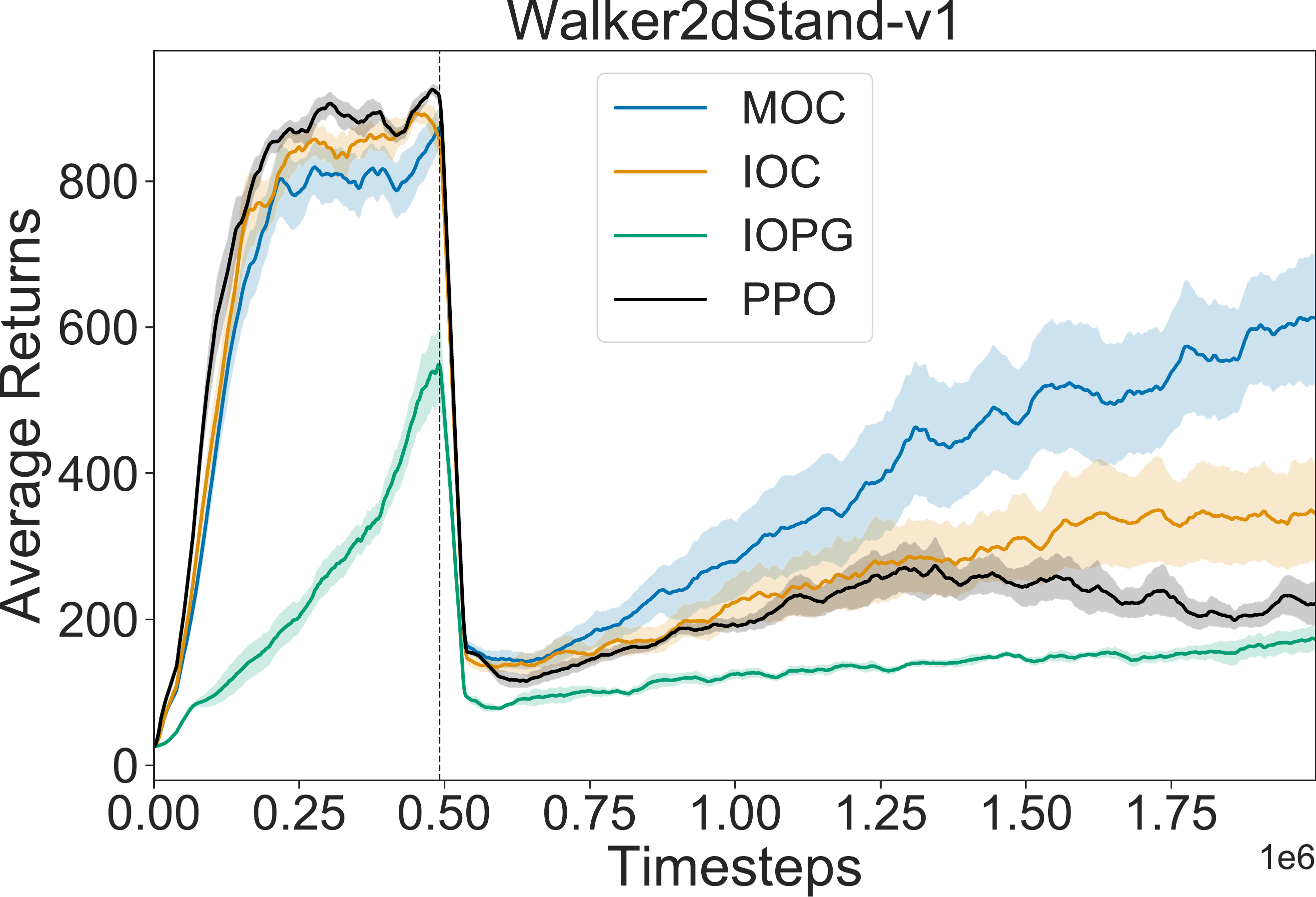}
        \caption[]
        {{\small }}
    \end{subfigure}
    \quad
    \begin{subfigure}[c]{0.24\columnwidth}
    \centering
        \includegraphics[width=1.\columnwidth]{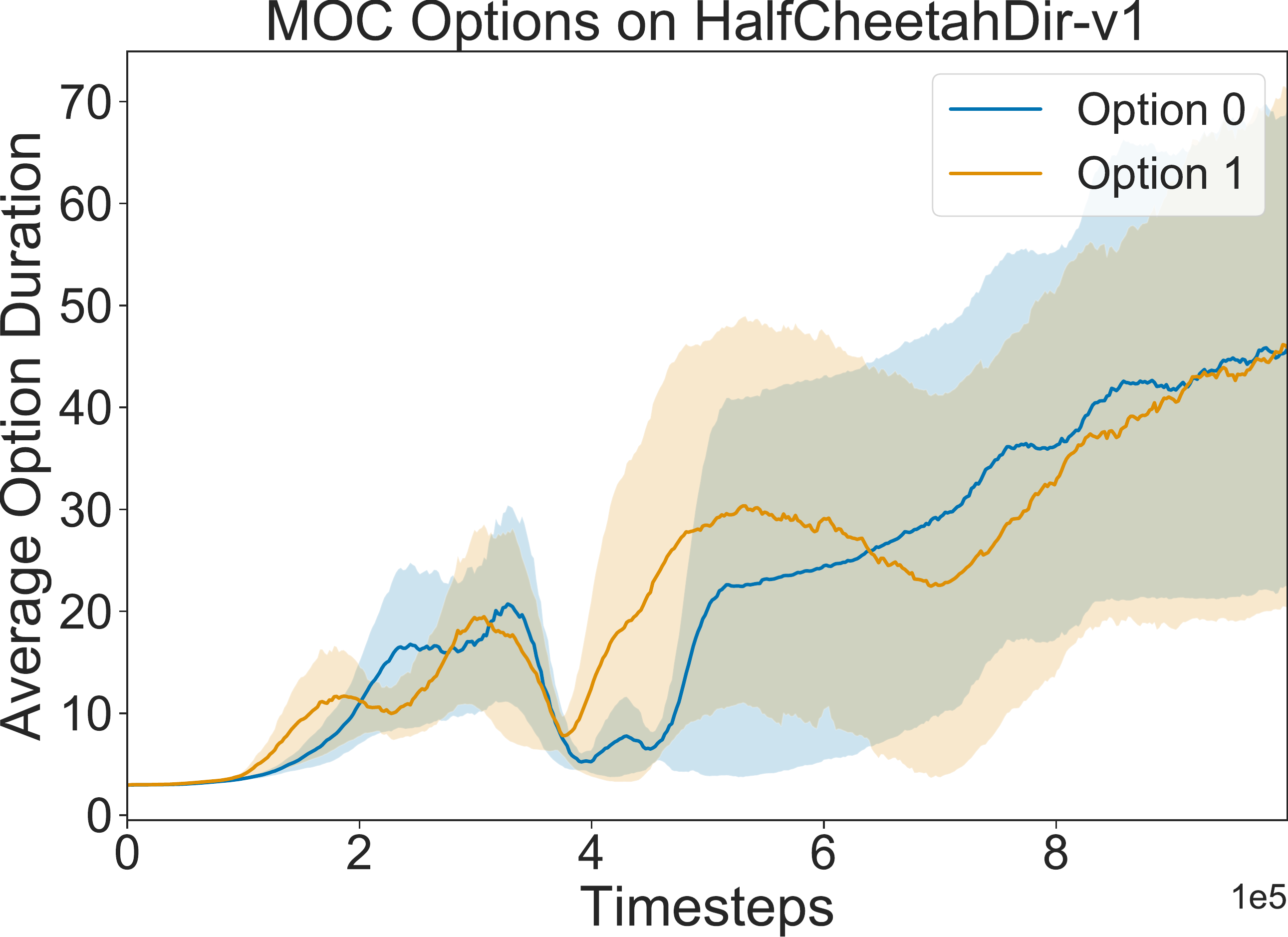}
        \caption[]
        {{\small }}
    \end{subfigure} 
    \begin{subfigure}[c]{0.24\columnwidth}
    \centering
        \includegraphics[width=1.\columnwidth]{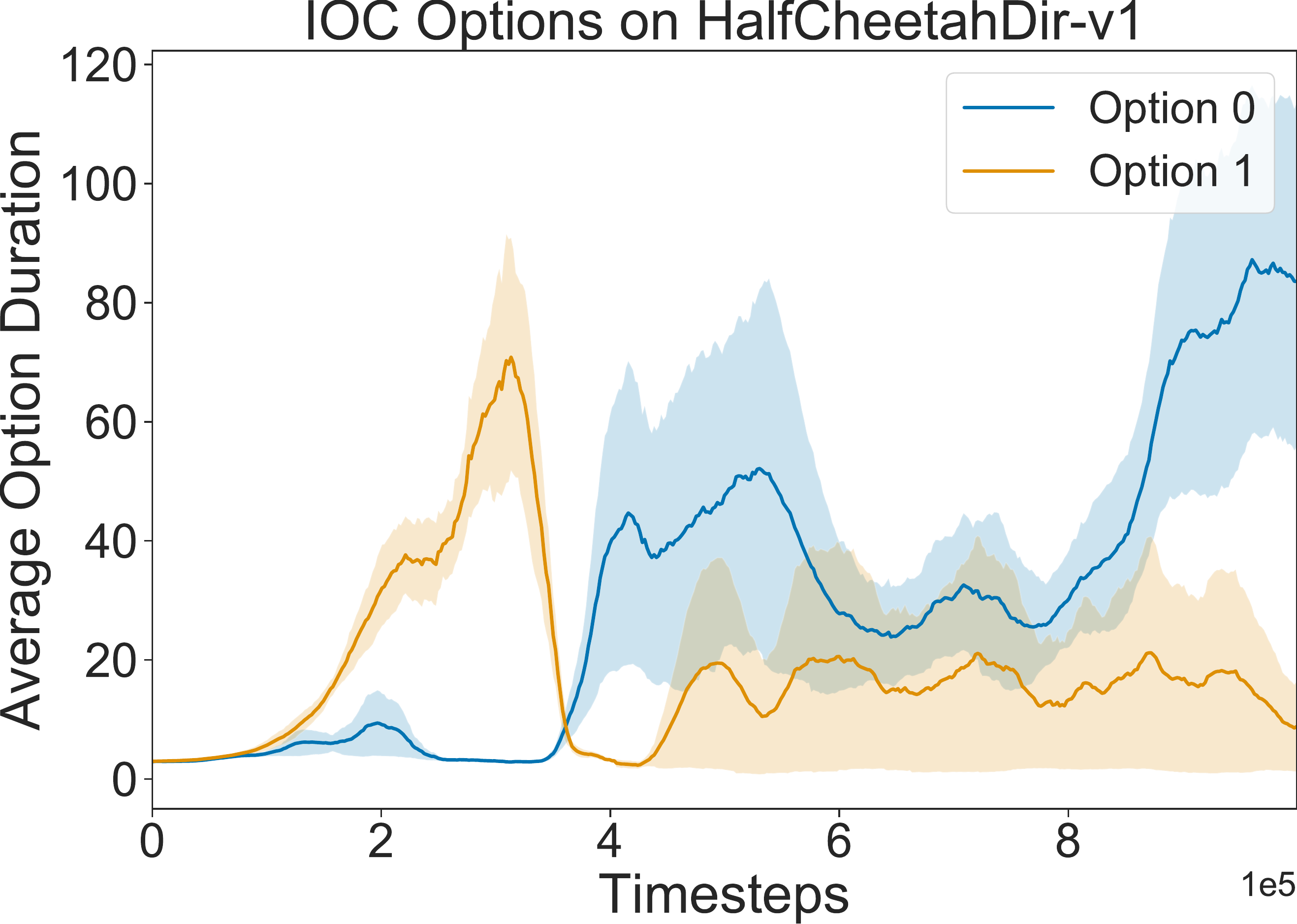}
        \caption[]
        {{\small }}
    \end{subfigure} 
    \begin{subfigure}[c]{0.24\columnwidth}
    \centering
        \includegraphics[width=1.\columnwidth]{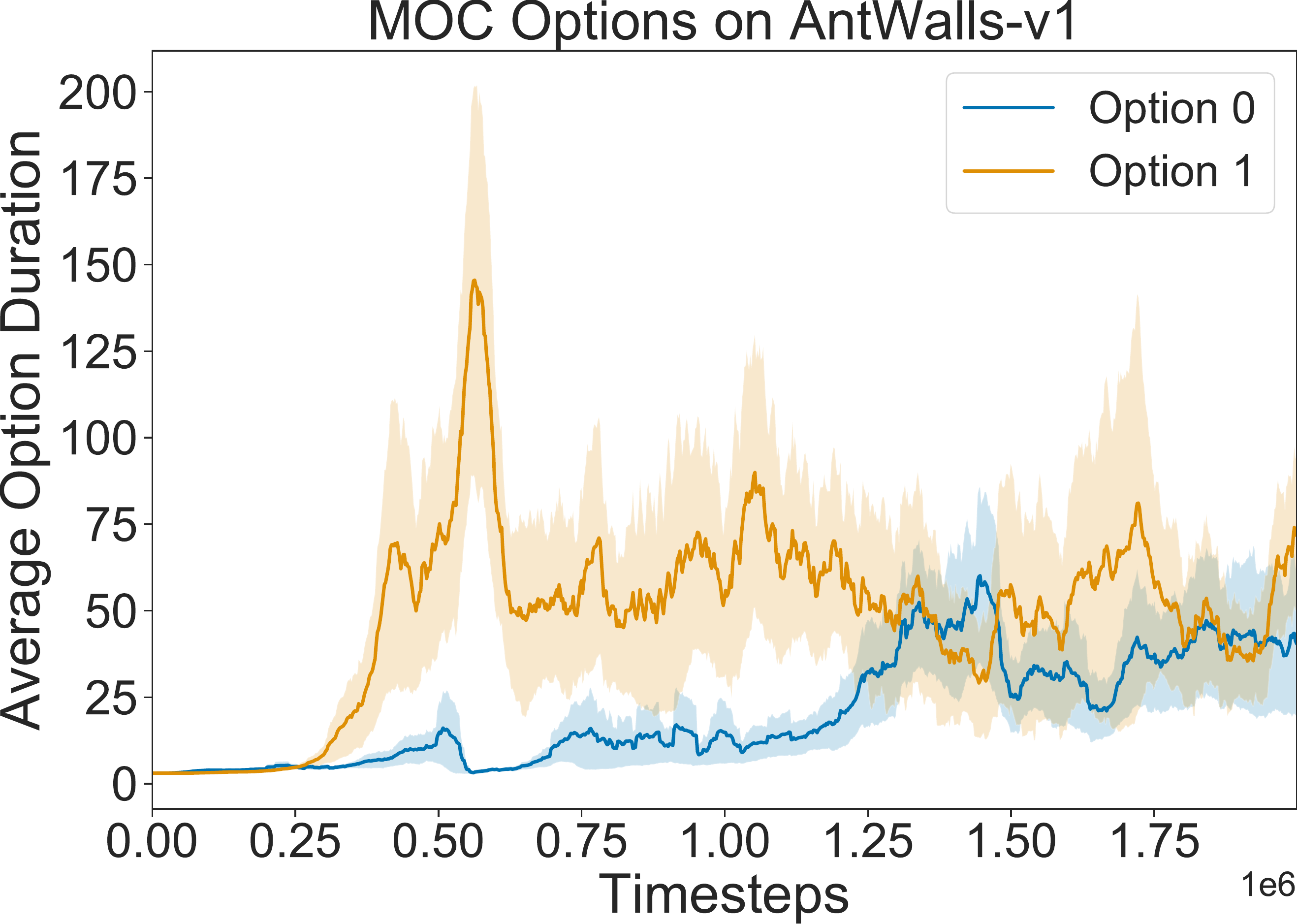}
        \caption[]
        {{\small }}
    \end{subfigure} 
    \begin{subfigure}[c]{0.24\columnwidth}
    \centering
        \includegraphics[width=1.\columnwidth]{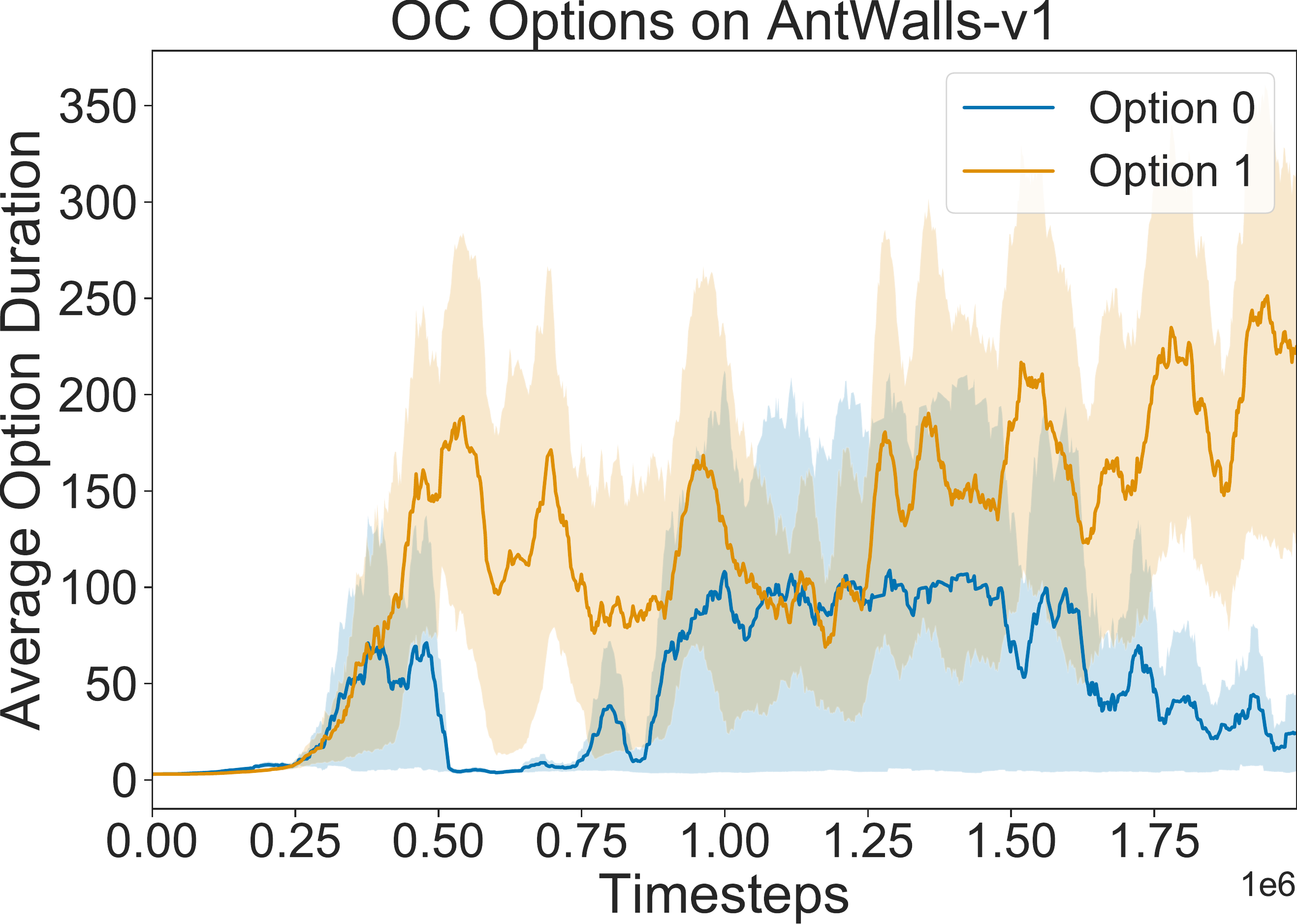}
        \caption[]
        {{\small }}
    \end{subfigure} 
    \caption{\textbf{Transfer on MuJoco.} In a-b-c) we compare the performance of our approach, MOC, to various hierarchical baselines and to a flat agent. At about a quarter into the training process, the original task is modified to a more challenging, but related, setting where the agents must transfer their knowledge. In d-e-f-g), we plot the option duration for both the MOC agent and the OC baseline. We notice that our approach tends to produce options that are extended in time, which can lead to meaningful behavior. All experiments are plotted using 10 independent runs.  }
    \label{fig:mujoco_perf}
\end{figure}

Across environments in Figure \ref{fig:mujoco_perf}abc we witness that MOC shows better sample efficient and performance compared to the other hierarchical agents as well as the PPO baseline. In particular, although IOPG updates all options similarly to our approach, this doesn't translate in a practical advantage. This highlights the adaptability of the update rules we propose to any state-of-the-art policy optimization algorithm. We also emphasize that in all these experiments, the same value of $\eta$ is used, making it a general approach.

\begin{figure}[h]
    \centering
    
    \begin{subfigure}[c]{.54\columnwidth}
    \centering
        \includegraphics[width=1.\columnwidth]{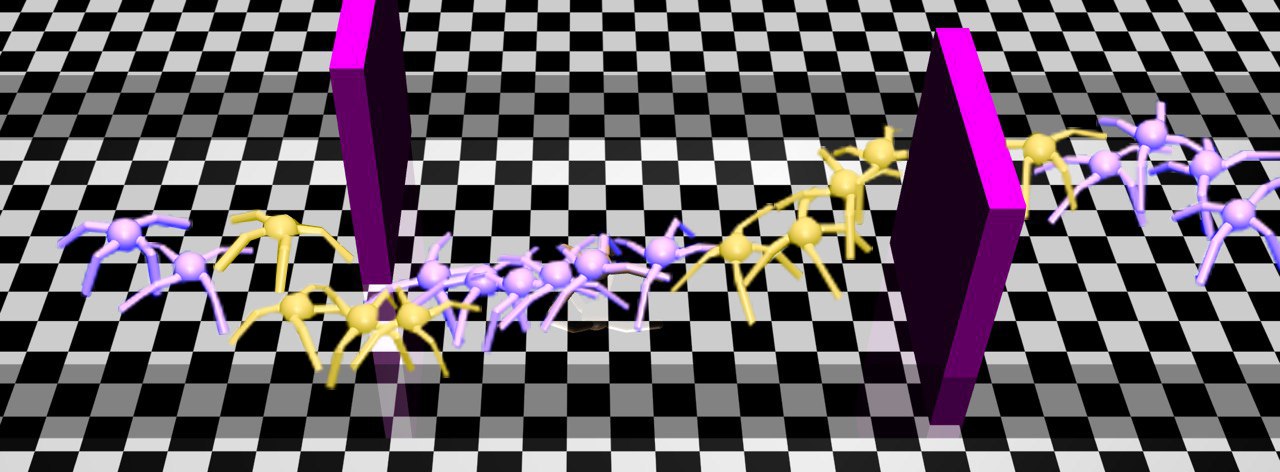}
        \caption[]
        {{\small }}
    \end{subfigure}
    \begin{subfigure}[c]{0.33\columnwidth}
    \centering
        \includegraphics[width=1.\columnwidth]{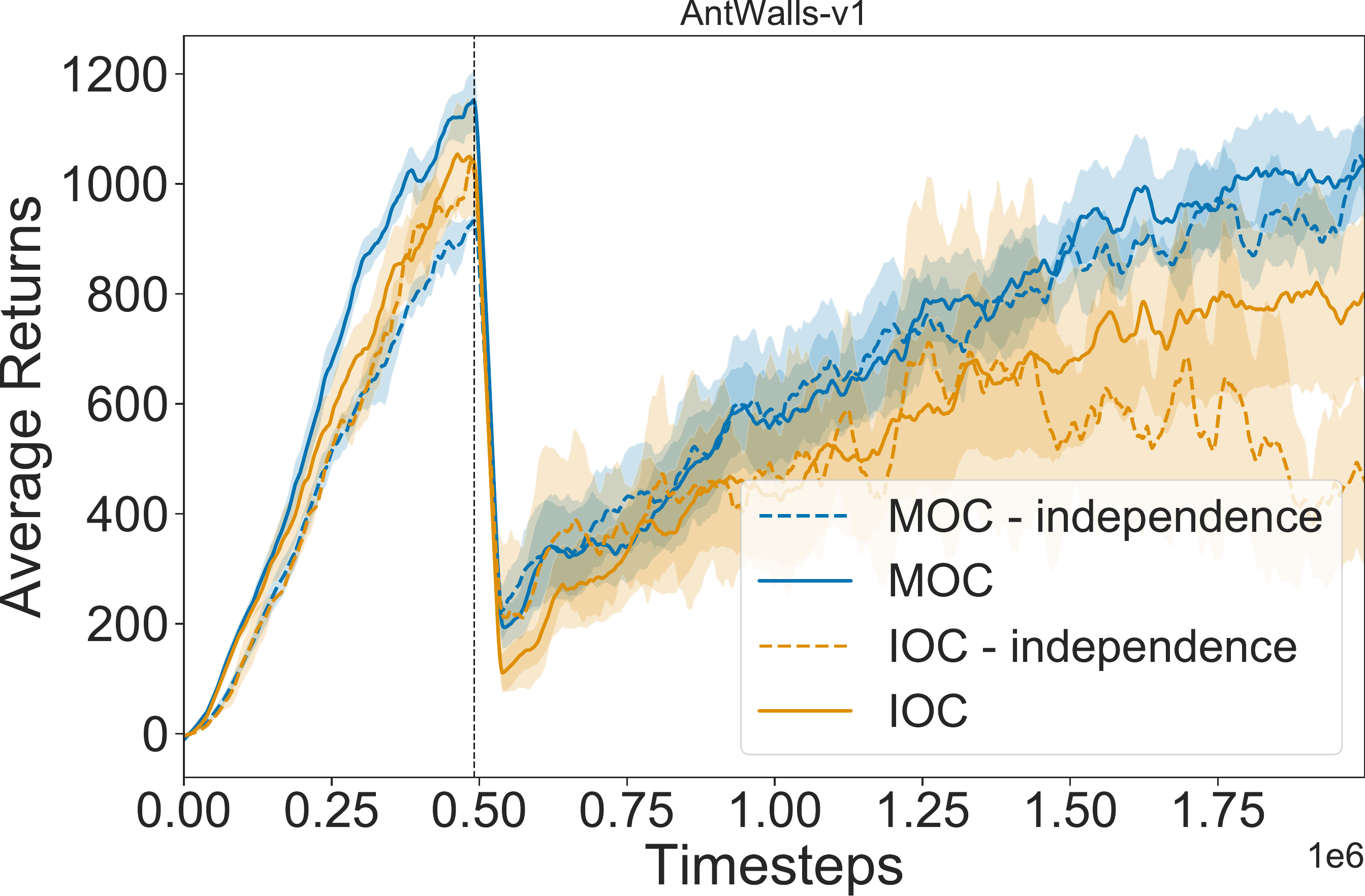}
        \caption[]
        {{\small }}
    \end{subfigure} 
    \caption{a) \textbf{Option visualization.} In this experiment, AntWalls-v1, the agent has first learned how to walk in the forward direction before we add obstacles (walls) which it needs to circumvent. We highlight in purple the states in which our agent uses the first option and in yellow the second option. b) Performance of MOC and IOC algorithms for different neural network architectures. Independence indicates that the second last hidden layer does is not shared across optiosn.}
    \label{fig:mujoco_visu}
\end{figure}

In Figure \ref{fig:mujoco_perf}defg we investigate the options' duration for both the IOC agent and our approach. We investigate both methods using two options, so as to provide more interpretable results  We argue that such a metric can be an important indicator of the usefulness of options, which are meant to be temporally extended. We notice that both in HalfCheetahDir-v1 and AntWalls-v1, both options learned by MOC are used by the agent over multiple timesteps. This is in contrast to the IOC agent where options at the end of training exhibit an imbalance. We also highlight that although MOC improves this aspect of option learning, the shaded are is still large which indicates that it does not fully solve the problem.

For the MOC agent, it is interesting to notice temporally extended options are truly present once the agent faces a new task. This perhaps suggests that future inquiry should look into whether challenging the agent with multiple tasks (such as in the continual learning setting) is actually a catalyst for the discovery of options.

 In Figure \ref{fig:mujoco_visu}a, we provide a visualization of meaningful behaviour exhibited by the MOC agent. We notice that in the AntWalls-v1 environment, one agent learns an option to move forward (highlighted in purple), while the other one is used to move around the walls (highlighted in yellow). We provide additional visualization (including from the IOC agent) in the appendix. We mention that although such meaningful behaviour was witnessed more often in our approach, we did not completely avoid degenerate solutions for all random seeds or all values of the learning rate.

\begin{wraptable}{r}{7cm}
\caption{Walker2dStand-v1}\label{wrap-tab:1}
\begin{tabular}{cccc}\toprule  
Algorithm & 2 options & 4 options & 8 options\\\midrule
MOC & 617 (98) & 574 (76) & 535 (81)\\  \midrule
IOC & 326 (92) & 238 (55)  & 212 (64)\\  \bottomrule
\label{options}
\end{tabular}
\end{wraptable} 

In our experiments we compared HRL algorithms when using two options. We now verify how MOC and IOC scale when we grow the option set to four and eight options. In Table \ref{options}, we notice that most of the gains from updating all options are persistent for each cardinality. However, we notice eventually a significant drop in performance when using 8 options. This could attributed to the fact that the domains we experiment with are simple enough that such a high number of options is unnecessary and leads to reduced sample efficiency. One way to verify this would be to apply the same algorithm in a continual learning setting. It could also be the case that the OC baseline on which we implement our flexible update rules could be itself improved for better performance. More experiments are needed to investigate these questions, which we leave for future work.

 Finally, a promising possibility of our method is to re-organize the way parameters are shared between options. The neural network architecture used by OC-based algorithms (including ours) shares most of the parameters between options: only each head (last layer of the network) is specific to each option and independent of others. In Figure \ref{fig:mujoco_visu}b, we investigate the performance of MOC and IOC when the parameters of the second to last hidden layer are also independent for each option. As we notice through the dashed lines, MOC remains more robust to this configuration. This would indicate that by using our proposed update rules, more sophisticated network architectures could be devised without sacrificing sample efficiency.

\subsubsection{MiniWorld Vision Navigation}
\textit{Experimental Setup:}  In this section we verify how our approach extends to tasks with high dimensional inputs such as the first-person visual navigation domain of MiniWorld \citep{gym_miniworld}. We provide a description for each environment in the appendix.

In MiniWorld-YMazeTransfer-v0, the transfer is not drastic and the flat agent is able to recover well in the transfer task. Moreover, MOC seemed to be slower to learn at first, although it was able to transfer with a smaller drop in performance. For the more challenging MiniWorld-WallGapTransfer-v0, we notice that HRL algorithms are more relevant and adapt better to the change, with MOC being more sample efficient.



\begin{figure}[h]
    \centering
    \begin{subfigure}[c]{0.28\columnwidth}
    \centering
        \includegraphics[width=1.\columnwidth]{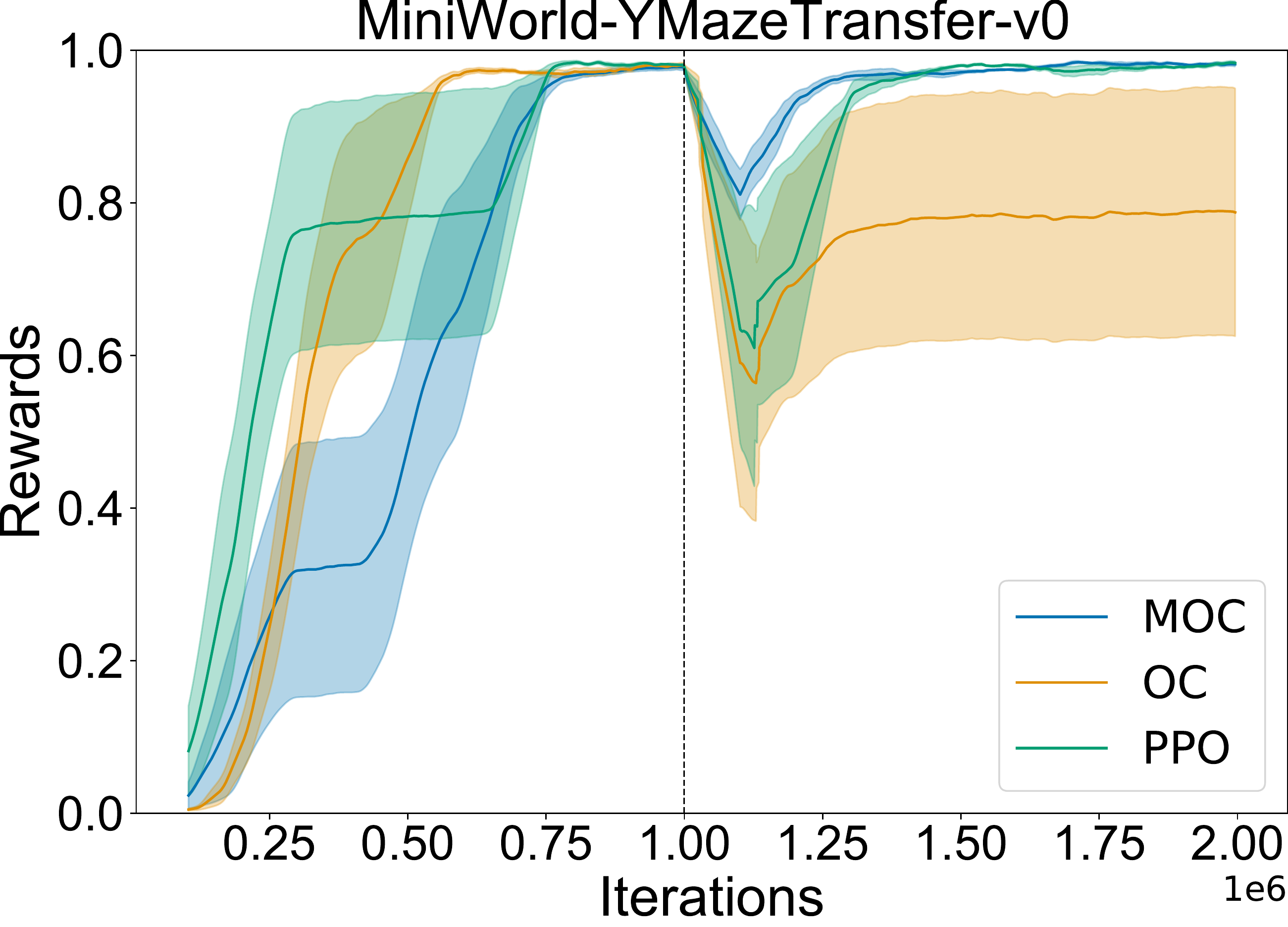}
        \caption[]
        {{\small }}
    \end{subfigure}
    \begin{subfigure}[c]{0.28\columnwidth}
    \centering
        \includegraphics[width=1.\columnwidth]{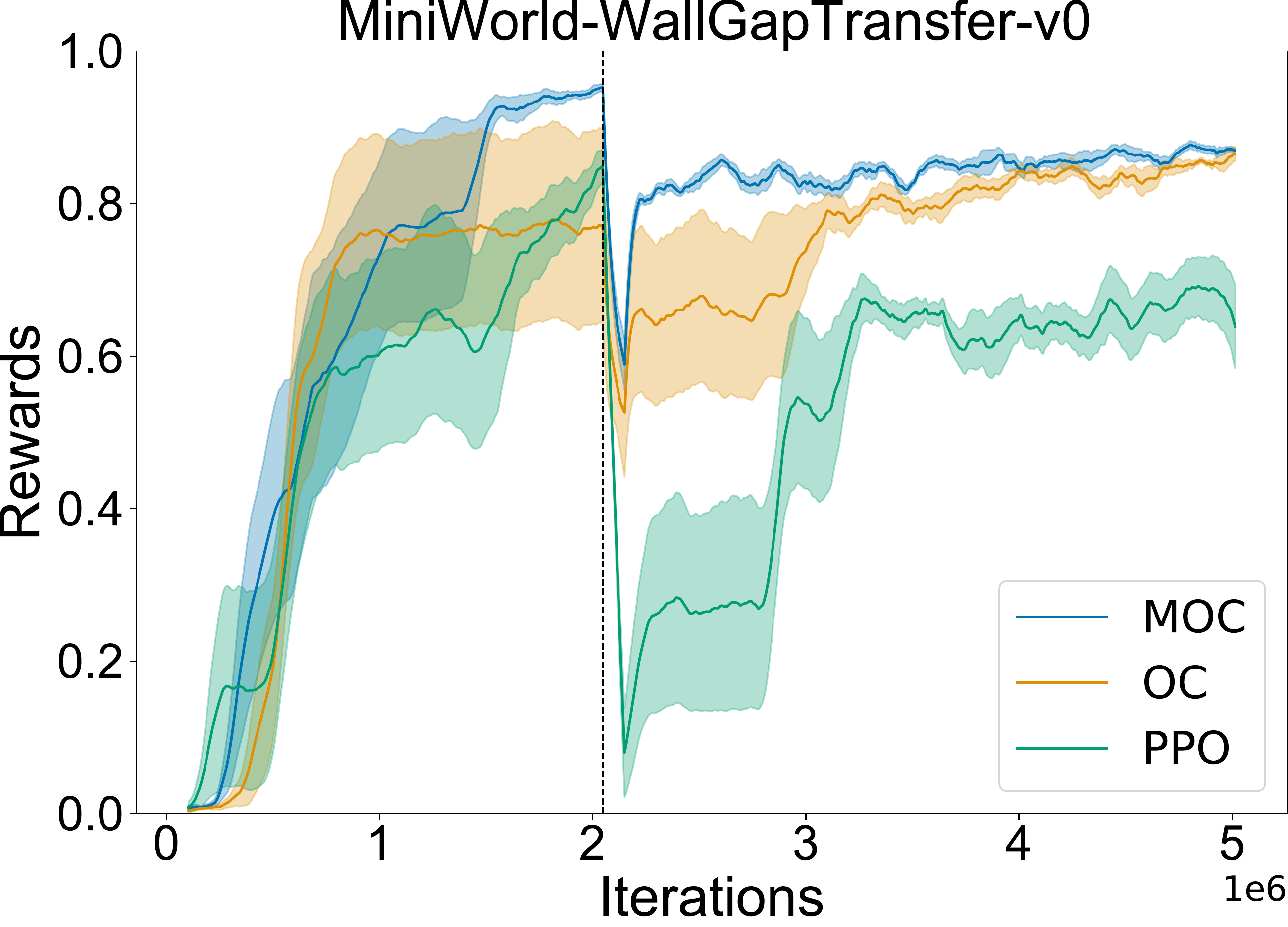}
        \caption[]
        {{\small }}
    \end{subfigure} 
    \begin{subfigure}[c]{0.19\columnwidth}
    \centering
        \includegraphics[width=1.\columnwidth]{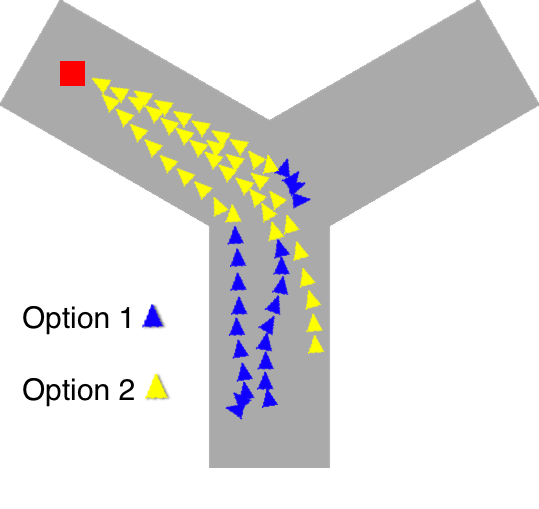}
        \caption[]
        {{\small }}
    \end{subfigure} 
    \caption{\textbf{Transfer in MiniWorld.} In a-b) we plot the performance of our approach and compare it to the option-critic (OC) and a flat baseline (PPO). In particular in b) we can appreciate the need for hierarchy when transferring to a similar, but more challenging task. In c) we present qualitative results on the kind of options that are being learned by our agent. In these experiments we average the performance across six independent runs.  }
    \label{fig:miniw_res}
\end{figure}

Finally, in \ref{fig:miniw_res}c, we plot the options learned by our MOC agent in the MiniWorld-YMazeTransfer-v0 environment. The first option learns to get to the middle of the maze and locate the red goal. When the goal has been seen, the second option moves towards it. Learning such a structure can potentially help an agent adapt to a new and related task.



\section{Conclusion}

In this work we have proposed a way to update all relevant options for a visited state. Our method does not introduce any additional estimators and is therefore readily applicable to many state-of-the-art RL, and HRL, algorithms in order to improve their performance and sample-efficiency. Updating all options simultaneously has been an integral part of the intra-option learning proposed in the options framework \citep{sutton1999between}. This work  therefore aims to extend this flexibility to function approximation and deep RL. 

Our proposed update rules can be seen as updates done in expectation with respect to options. As such, they are reminescent of generalizations of standard RL algorithms, such Expected SARSA \citep{expectedsarsa} and Expected Policy Gradients \citep{epg}. Amongst other qualities, such approaches have theoretically been shown to reduce variance. A similar analysis could be performed in our framework to verify what theoretical guarantees can be obtained from updating all relevant options. 



We also proposed a flexible way to control between updating all relevant options and updating only the sampled one through the hyperparameter $\eta$. Potential future work could investigate whether a state-dependent $\eta$ function, similarly to interest functions \citep{etd}, could be used to highlight important transitions, such as bottlenecks, and update all options only in those states. Finally, learning all options simultaneously could be further leveraged in the case where we learn a very large number of them simultaneously \citep{dlrl,uvf}, before deciding which are useful. Such an investigation would most likely be most promising in a continual learning setting. 
Updating all relevant options also means that we could implement less parameter sharing between options and discover neural network architectures better suited for HRL.

\begin{ack}
The authors would like to thank the National Science and Engineering Research Council of Canada (NSERC)  for funding this research; Khimya Khetarpal for early discussions on the project  and the anonymous reviewers for providing critical and constructive feedback.
\end{ack}

\bibliography{references}
\bibliographystyle{plainnat}

\section*{Checklist}


\begin{enumerate}

\item For all authors...
\begin{enumerate}
  \item Do the main claims made in the abstract and introduction accurately reflect the paper's contributions and scope?
    \answerYes{}
  \item Did you describe the limitations of your work?
    \answerYes{}
  \item Did you discuss any potential negative societal impacts of your work?
    \answerYes{}
  \item Have you read the ethics review guidelines and ensured that your paper conforms to them?
    \answerYes{}
\end{enumerate}

\item If you are including theoretical results...
\begin{enumerate}
  \item Did you state the full set of assumptions of all theoretical results?
    \answerYes{}
	\item Did you include complete proofs of all theoretical results?
    \answerYes{}
\end{enumerate}

\item If you ran experiments...
\begin{enumerate}
  \item Did you include the code, data, and instructions needed to reproduce the main experimental results (either in the supplemental material or as a URL)?
    \answerYes{}
  \item Did you specify all the training details (e.g., data splits, hyperparameters, how they were chosen)?
    \answerYes{}
	\item Did you report error bars (e.g., with respect to the random seed after running experiments multiple times)?
    \answerYes{}
	\item Did you include the total amount of compute and the type of resources used (e.g., type of GPUs, internal cluster, or cloud provider)?
    \answerYes{}
\end{enumerate}

\item If you are using existing assets (e.g., code, data, models) or curating/releasing new assets...
\begin{enumerate}
  \item If your work uses existing assets, did you cite the creators?
    \answerNo{}
  \item Did you mention the license of the assets?
    \answerNA{}
  \item Did you include any new assets either in the supplemental material or as a URL?
    \answerNo{}
  \item Did you discuss whether and how consent was obtained from people whose data you're using/curating?
    \answerNA{}
  \item Did you discuss whether the data you are using/curating contains personally identifiable information or offensive content?
    \answerNA{}
\end{enumerate}

\item If you used crowdsourcing or conducted research with human subjects...
\begin{enumerate}
  \item Did you include the full text of instructions given to participants and screenshots, if applicable?
    \answerNA{}
  \item Did you describe any potential participant risks, with links to Institutional Review Board (IRB) approvals, if applicable?
    \answerNA{}
  \item Did you include the estimated hourly wage paid to participants and the total amount spent on participant compensation?
    \answerNA{}
\end{enumerate}

\end{enumerate}
 
 \newpage

\appendix

\section{Appendix}

\subsection{Algorithm}
The code is available at \url{https://github.com/mklissa/MOC}.

\begin{algorithm}
		\caption{Multi-updates Option Critic}
		\label{alg:ioc}
		\begin{algorithmic}
			\STATE Set $s \longleftarrow s_0$ 
			\STATE Choose $o$ at $s$ according to $\mu_z(.|s)$
			\REPEAT
			\STATE Choose $a$ according to $\pi_{\zeta}(a|s,o)$
			\STATE Take action $a$ in $s$, observe $s', r$
			\STATE Sample termination from $\beta_{\nu}(s',o)$
			
			\IF{$o$ terminates in $s'$}
			\STATE Sample $o'$ according to $\mu_z(.|s)$
			\ELSE
            	\STATE $o'$ = $o$
            \ENDIF
            \STATE Define previous option $\bar{o} = o$
            
			\textbf{1. Evaluation step:}\\
			
\text{\textbf{for} each option} $\tilde{o}$ in the option set $\mathcal{O}$ \textbf{do} \\    
        \STATE \text{   } $\delta \gets \, \mathbb{E}[U^{\rho}|s,\tilde{o}] -   Q_{\theta}(s,\tilde{o})$ \text{ where } $U^{\rho}$ \text{ is an importance sampling weighted target}
		 \STATE \text{   } $\theta \gets \, \theta + p_{\mu,\beta}(\tilde{o}|s,\bar{o})  \alpha \delta \phi(s,\tilde{o})$\\
		 \textbf{end for}

			\textbf{2. Improvement step }\\
			\text{\textbf{for} each option} $\tilde{o}$ in the option set $\mathcal{O}$ \textbf{do} \\  
			\STATE \text{    }$\zeta \gets \zeta + p_{\mu,\beta}(\tilde{o}|s,\bar{o}) \alpha_\zeta \frac{\partial \log \pi_{\zeta}(a| s, o)}{\partial \zeta} Q_{\theta}(s,o,a)$\\
			\textbf{end for}\\
			
		    \STATE $\nu \gets \nu - \textbf{}\alpha_\nu \frac{\partial \beta_{ \nu}(s',o )}{\partial \nu} (Q_{\theta}(s', o) - V_{\theta}(s')) $
            \STATE $z \gets z + \alpha_z  \beta_{\nu}(s',o) \frac{\partial \mu_{z}(o' | s')}{\partial z} Q_{\theta}(s', o')$
            
            $s\leftarrow s'$
			\UNTIL{$s'$ is a terminal state}
		\end{algorithmic}
\end{algorithm}

\subsection{Tabular experiments}

\subsubsection{Implementation Details}
For our experiments of the FourRooms domain we based our implementation on \citep{oc} and ran the experiments for 500 episodes that last a maximum of 1000 steps with goal located in the right hallway.

In the first experiment we verify whether learning a fixed set of options can be accelerated by our method. We define this fixed set as the hallway options from \cite{sutton1999between}. As the policies of these options were deterministic and we use importance sampling, we relax them to  stochastic policies where the most likely action is the one leading to a hallway. 

In the second experiment, after 500 episodes where the goal is located in the right hallway, we move the goal a random location in the bottom left room and augment the stochasticity of the environment. Both of these modifications are done to verify the agent's ability to adapt to change. 

For all experiments and all algorithms we performed a learning rate sweep in the rage of $\{2 \cdot 10^i,4 \cdot 10^i,6 \cdot 10^i,8 \cdot 10^i: i=-1,-2,-3\} $. We used the same learning rate for both intra-option updates as well as the udpates on the value function. We did so to avoid introducing a larger hyperparameters search for the hierarchical approaches. For the MOC hyperparameter $\eta$, we looked at values in $\{ 0.2, 0.3, 0.5, 0.7, 1.0 \}$. All values are available in Table \ref{tab:4r_hyper}

\begin{table}[hbt!]
    \centering
    \begin{tabular}{c|c}
    Hyperparameter & Value \\
    \hline
     AC lr & 2e-1\\
     OC lr & 8e-1 \\
     MOC lr & 8e-1 \\
     $\eta$ & 0.3 \\
     \end{tabular}
    \caption{Best hyperparameters for the FourRooms domain. }
    \label{tab:4r_hyper}
\end{table}

\subsubsection{Option visualization}

In the following figure we present a qualitative representation of the options being learned by our method MOC and compare it to the OC method.
\begin{figure}[h]
    \centering
    \begin{subfigure}[c]{0.45\columnwidth}
    \centering
        \includegraphics[width=1.\columnwidth]{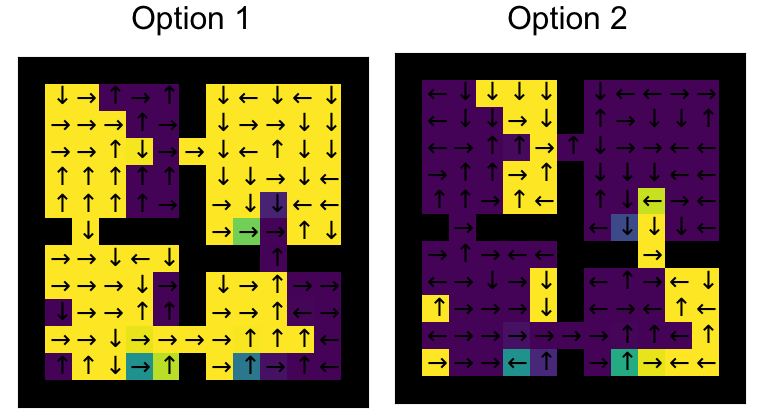}
        \caption[]
        {{\small } Option-Critic}
    \end{subfigure}
    \quad
    \begin{subfigure}[c]{0.45\columnwidth}
    \centering
        \includegraphics[width=1.\columnwidth]{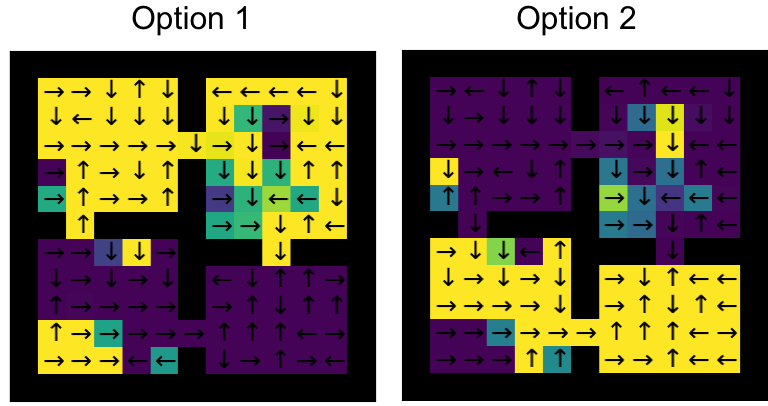}
        \caption[]
        {{\small } Multi-updates Option-Critic}
    \end{subfigure}
    \caption{\textbf{Visualization of the learned options.} For each option, the brightness of the state represent the probability of selecting the said option. For each state and option, we plot by a arrow the most likely action.
    In a) we present two options learned by the OC algorithm. We notice a certain imbalance between options in the sense that option 1 is almost always taken by the policy over options. In b) we present two options learned by our approach, MOC, and notice a better balance between options. Moreover, even if an option is not likely to be selected, it generally chooses relevant actions, which is not the case for the OC agent.  }
    \label{fig:visual_tab}
\end{figure}

\subsection{Linear Function Approximation}

\subsubsection{Implementation Details}
The environment we consider is a sparse variant of the classic Mountain Car domain \citep{Moore-1991-13223,Sutton1998} where a positive reward is given upon reaching the goal.  We consider the transfer setting where after half the total number of timesteps the gravity is doubled. 

In this experiment, the critic is a linear function of the features, while the actor is a two layer network with a tanh non-linearity in the hidden layer. The features are defined through four radial basis functions with radii  $\{5,2,1,0.5\}$. These RBF kernels are fit by sampling random states from the environment for 100000 steps before the training starts.

For all algorithms we performed a learning rate  sweep in the set  $\{ 0.0008, 0.001,0.003, 0.004, 0.005, 0.006,0.008, 0.01 \}$. As in the tabular case, we used the same learning rate for both intra-option policy updates, the option value function udpates, the termination functions updates as well as the udpates on to the policy over options. We did so to avoid introducing a larger hyperparameters search for the hierarchical approaches.  Other hyperparameters values are standard values for the A2C algorithm. For the MOC hyperparameter $\eta$, we did a hyperparameter study for values in $\{ 0.1, 0.2, 0.3, 0.4, 0.5, 0.7, 0.8, 0.9, 1.0 \}$ in order to verify how the algorithm behaves with respect to the number of updates done to all options. The results are plotted in Figure \ref{fig:lfa_eta} and we see that higher values of $\eta$ lead to generally better final performance. All values are available in Table \ref{tab:lfa_hyper}

\begin{table}[hbt!]
    \centering
    \begin{tabular}{c|c}
    Hyperparameter & Value \\
    \hline
     AC lr & 0.005\\
     OC lr & 0.004 \\
     MOC lr & 0.004 \\
     $\eta$ & 0.9 \\
     \# steps before update & 5 \\
     actor hidden features size & 128 \\
     value loss coefficient & 0.5\\
     \# RBF kernels & 5
     \end{tabular}
    \caption{Best hyperparameters for the MountainCar-v0 domain. }
    \label{tab:lfa_hyper}
\end{table}

\subsection{Continuous Control}

\subsubsection{Tasks Description}
 We build on classic continuous control domains \citep{BrockmanCPSSTZ16} and modify them to make the learning process more challenging. The first experiment is \textbf{HalfCheetahDir-v1}, visualized in Figure \ref{fig:hc_visu} , in which the cheetah first learns to walk in the forward direction and then needs to adapt to a change in reward signal which now encourages walking backward. The second experiment is \textbf{AntWalls-v1} illustrated in Figure \ref{fig:ant_visu} where the agent first learns how to walk forward as in the classic Ant-v0 domain \citep{BrockmanCPSSTZ16}. In the transfer task, we add walls as obstacles and provide a terrain-read which the agent can use to circumvent them.  The final experiment is \textbf{Walker2dStand-v1} shown in \ref{fig:walk_visu} in which the walker agent first learns how to stand up and in the transfer task needs to learn how to walk.

\begin{figure}[h]
    \centering
    \begin{subfigure}[c]{0.32\columnwidth}
    \centering
        \includegraphics[width=1.\columnwidth]{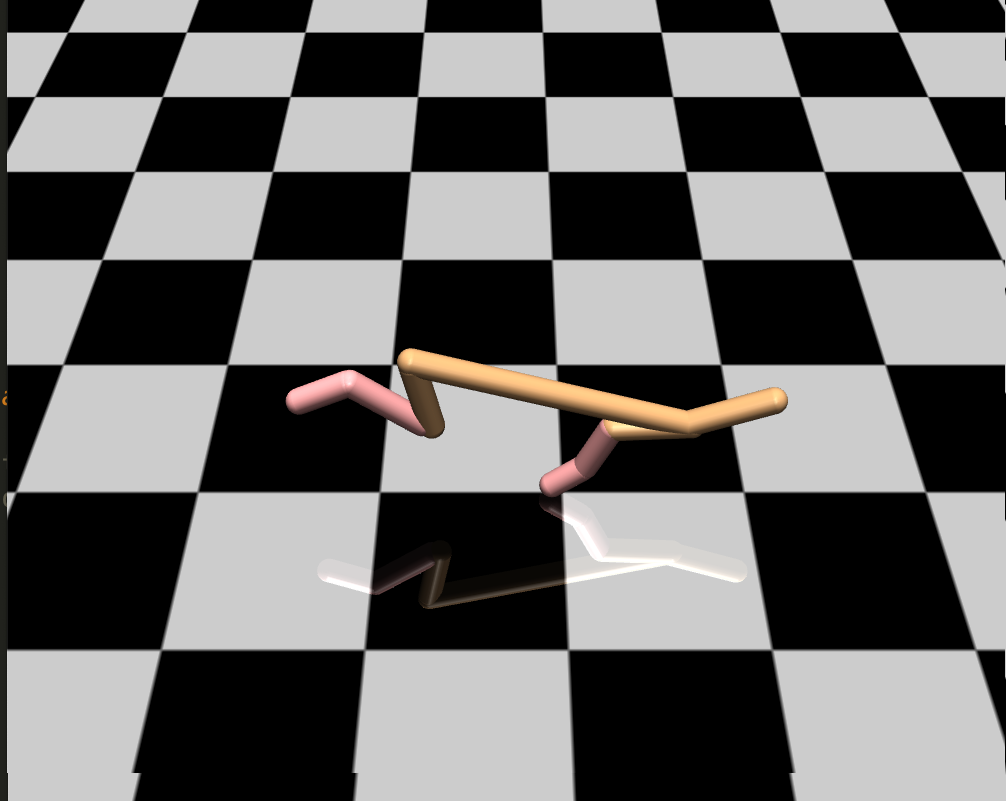}
        \caption[]
        {{\small }}
        \label{fig:hc_visu}
    \end{subfigure}
    \begin{subfigure}[c]{0.32\columnwidth}
    \centering
        \includegraphics[width=1.\columnwidth]{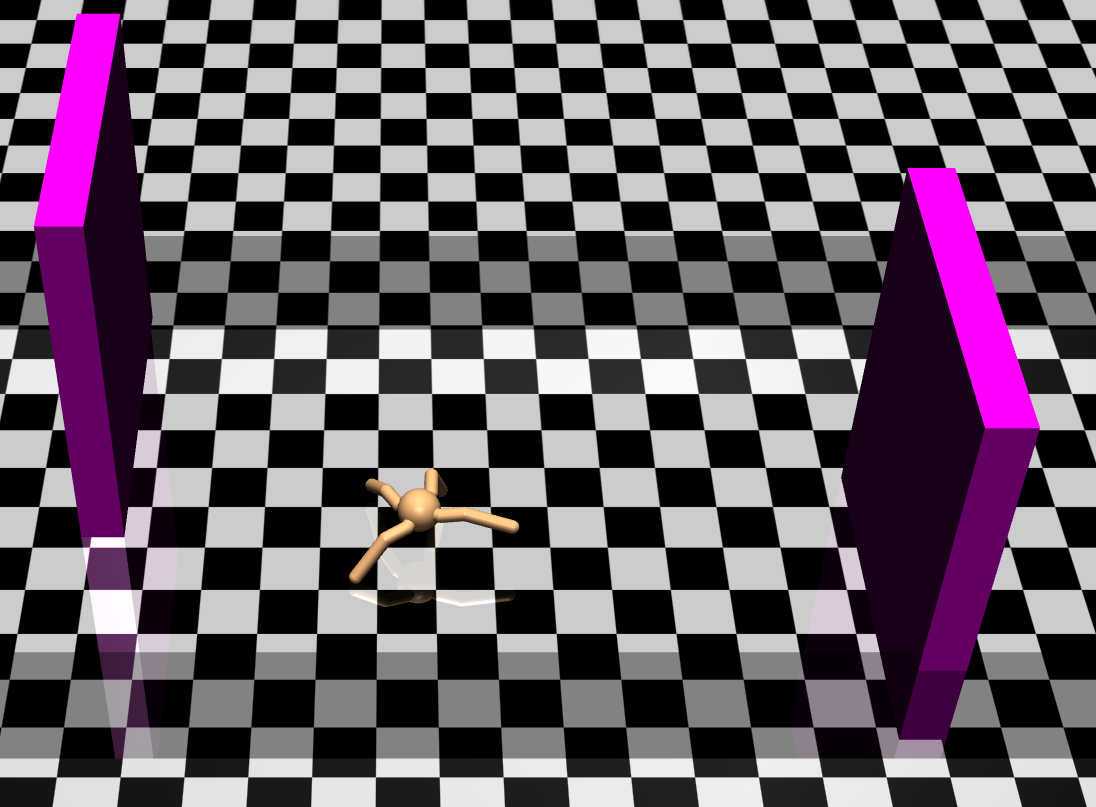}
        \caption[]
        {{\small }}
        \label{fig:ant_visu}
    \end{subfigure}
    \begin{subfigure}[c]{0.32\columnwidth}
    \centering
        \includegraphics[width=1.\columnwidth]{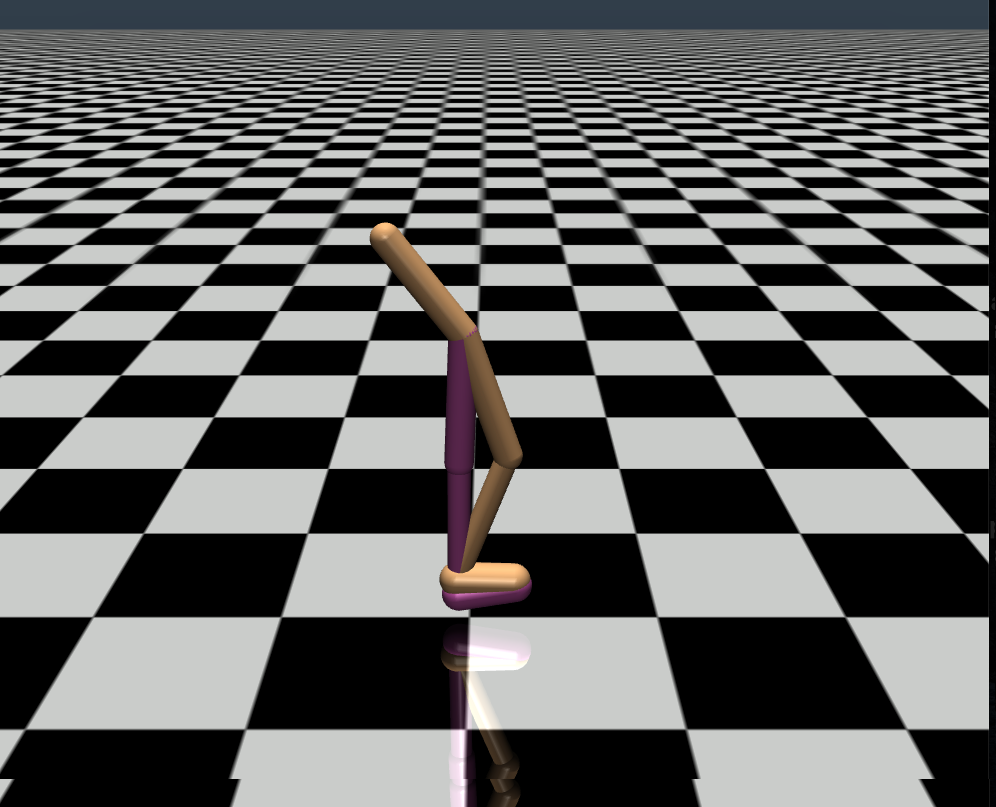}
        \caption[]
        {{\small }}
        \label{fig:walk_visu}
    \end{subfigure} 
    \caption{\textbf{MuJoCo} tasks visual representation. }
\end{figure}

\begin{table}[htb]
\small
    \caption{MuJoCo experiments hyperparameters}
           
    \begin{tabular}{c|c|c|c}
    Hyperparameter & AntWalls-v1 & Walker2dStand-v1 & HalfCheetahDir-v1 \\
    \hline
     PPO Learning rate & 2e-4 & 2e-4 & 3e-4 \\
     IOC LR (policy, termination, master) & 1e-4 & 1e-4 & 1e-4,5e-4,5e-4\\
     MOC LR (policy, termination, master) & 8e-5 & 9e-5 & 3e-4,5e-4,5e-4\\
     IOPG LR (policy, termination, master) & 9e-5 & 8e-5 &  1e-4,3e-4,5e-4\\
     $\gamma$ & 0.99 & 0.99 & 0.99   \\
     $\lambda$ & 0.95 & 0.95 & 0.95  \\
     Entropy coefficient & 0.0 &0.0 & 0.0   \\
     LR schedule & constant & constant &constant \\
     PPO steps & 2048 & 2048 &2048  \\
     PPO cliping value & 0.1 & 0.1 & 0.1 \\
     \# of minibatches & 32 & 32 & 32 \\
     \# of processes & 1 & 1 & 1 \\
     $\eta$ & 0.9 & 0.9 & 0.9 \\
     \end{tabular}
\normalsize
    
    \label{tab:muj_values}
\end{table}

\subsubsection{Implementation Details}
We based our implementation on \citep{baselines,ppoc_klissarov,ioc} and we ran the experiments for 2M steps for AntWalls-v1 and Walker2dStand-v1, and for 1M steps for HalfCheetahDir-v1. After about one quarter of the total timesteps we change the original task to a transfer task that is meant to be more challenging.

For the Walker2dStand-v1 and AntWalls-v1 experiments, we performed a hyperparameter search for all algorithms on the learning rate in the set  $\{8 \cdot 10^{-5},9 \cdot 10^{-5},2 \cdot 10^{-4},5 \cdot 10^{-4},7 \cdot 10^{-4}\}$. As in our previous experiments, we used the same learning for all options components. For the HalfCheetahDir-v1 experiment,  we performed a hyperparameter search for all algorithms on the learning rate in the set  $\{1 \cdot 10^{-4},3 \cdot 10^{-4},5 \cdot 10^{-4},7 \cdot 10^{-4}\}$. For this experiment we tried different values of the learning rates for each option component in order to stay consistent with the hyperparameter search done in previous work \citep{ioc}. For the MOC $\eta$ hyperparameter, we simply used $0.9$ as we have previously shown that higher values are usually better.

 The architecture for the algorithms was kept to be the same with respect to the original code base. All other PPO hyperparameters were kept as is. All the values are available in Table \ref{tab:muj_values}.

\subsubsection{Option visualization}
We share qualitative results of the kind of options our method, MOC, learns as well as the ones learned by the IOC approach. These results are available in the form of videos in the following \href{https://drive.google.com/drive/folders/1A-P5vJvjKBfi1FvNIZ5epImlJKYvUTDa?usp=sharing}{URL}.

\subsection{MiniWorld Navigation}
We build on the first-person visual navigation domain of MiniWorld \citep{gym_miniworld} to propose a transfer learning setting. In the transfer task we aim to increase the difficulty of the source. Moreover, we look at a setting where the agents could benefit from reusing the options learned in the source task. The tasks are defined as in the following.

We experiment with the \textbf{MiniWorld-YMazeTransfer-v0} shown in Figure \ref{fig:ymaze_visu} domain which contains of a maze in a Y shape. The agent spawns in a random position in the bottom part and has to reach the goal represented by a red box. At first this goal is in a random location on the left side and in the transfer task the goal is moved to the right side. We also explore the \textbf{MiniWorld-WallGapTransfer-v0} domain shown in \ref{fig:gap_visu}, where a gap in the wall separates two large rooms. At first, the agent has to navigate to the adjacent room, scan the environment and collect the goal represented by a red box. In the transfer task, we introduce a blue box, which now is the new goal and is located in the room adjacent to where the agent spawns. We also include a red box in the same room as the agent, however this box no longer gives the agent any reward. 

For both environment, the agent and the goal locations are randomized at each iteration in order to avoid deterministic solutions.

\begin{figure}[h]
    \centering
    \begin{subfigure}[c]{0.32\columnwidth}
    \centering
        \includegraphics[width=1.\columnwidth]{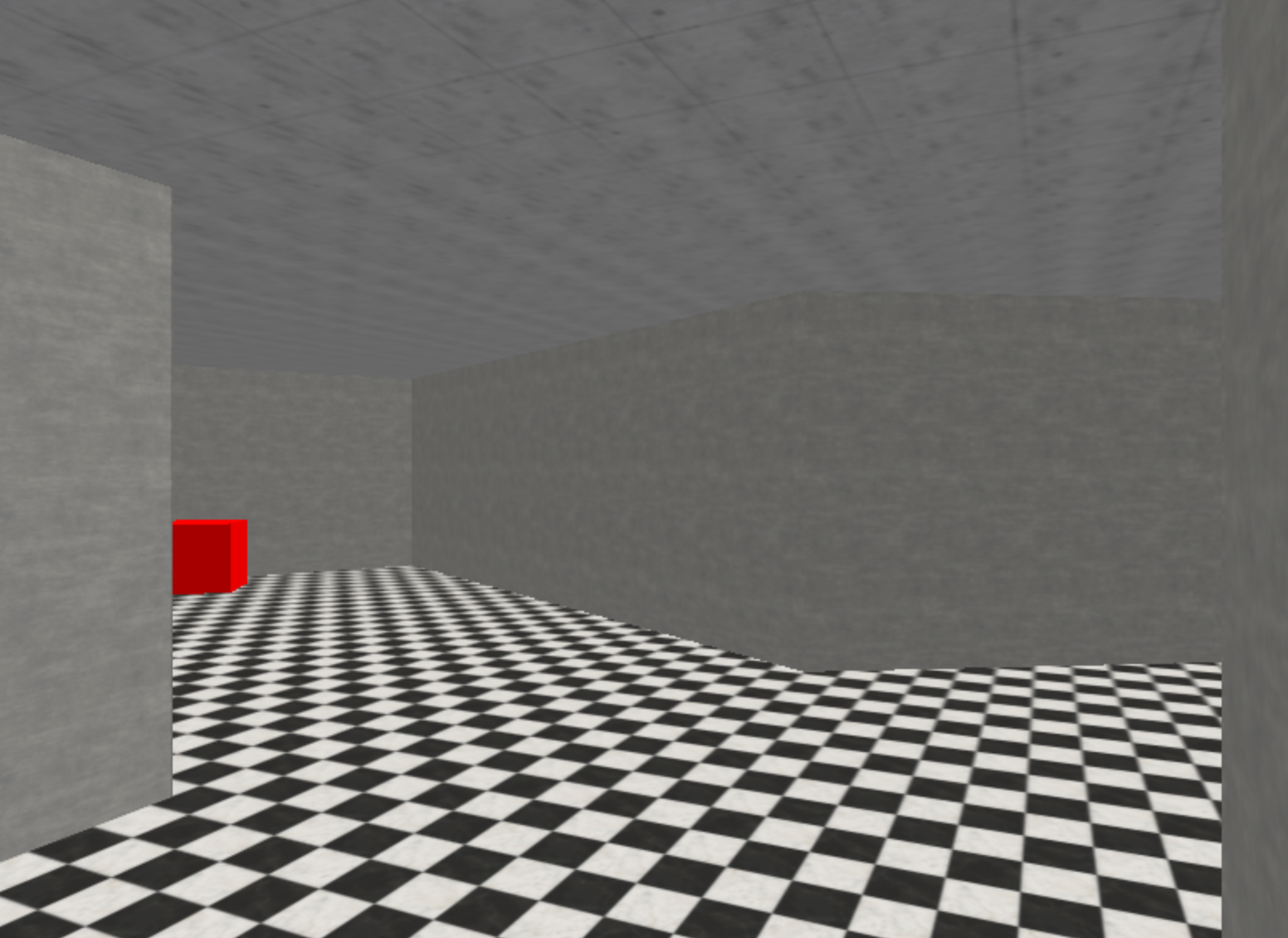}
        \caption[]
        {{\small }}
        \label{fig:ymaze_visu}
    \end{subfigure}
    \begin{subfigure}[c]{0.32\columnwidth}
    \centering
        \includegraphics[width=1.\columnwidth]{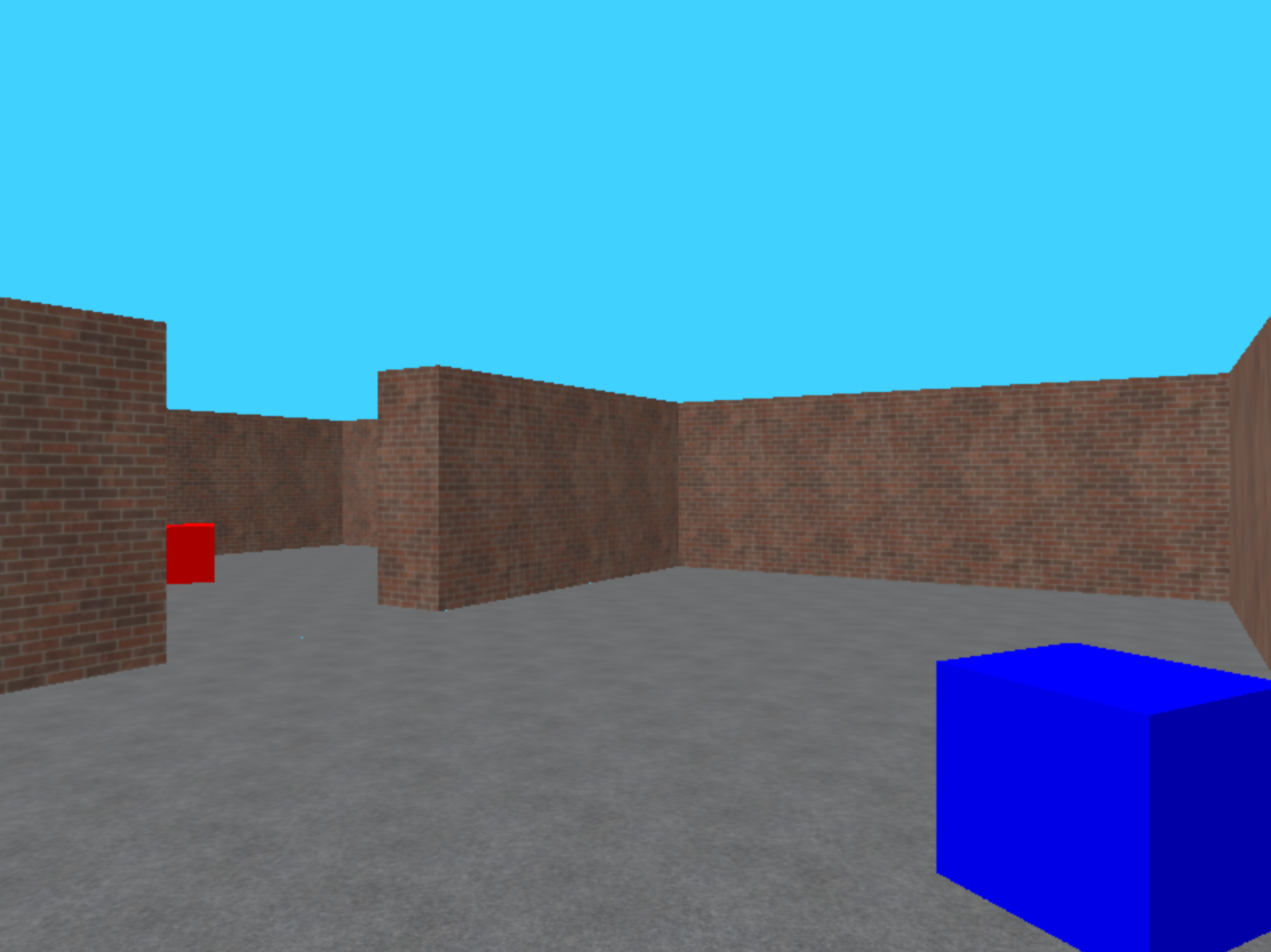}
        \caption[]
        {{\small }}
        \label{fig:gap_visu}
    \end{subfigure}
    \caption{\textbf{MiniWorld} tasks visual representation. }
\end{figure}

\begin{table}[ht!]
    \centering
    \begin{tabular}{c|c}
    Hyperparameter & Value \\
    \hline
     Learing rate & 2.5e-4\\
     $\gamma$ & 0.99  \\
     $\lambda$ & 0.95  \\
     Entropy coefficient & 0.01  \\
     LR schedule & constant \\
     PPO steps & 128 \\
     PPO cliping value & 0.1 \\
     \# of minibatches & 4 \\
     \# of processes & 32 \\
     $\eta$ & 0.7
     \end{tabular}
    \caption{Hyperparameters for the MiniWorld experiments }
    \label{tab:miniw_values}
\end{table}

\subsubsection{Implementation Details}
We based our implementation on \citep{pytorchrl} and we ran the experiments for 2M steps for the MiniWorld-YMazeTransfer-v0 experiment and 5M steps for the MiniWorld-WallGapTransfer-v0 experiment. After 1M steps and 2M respectively, we change the original task to a transfer task that is meant to be more challenging.

The architecture for the algorithms was kept to be the same with respect to the original code base. Similarly, all PPO hyperparameters, including the learning rate, were kept as is. For the MOC algorithm we used $0.7$ as a value for the hyperparameter $\eta$ after looking for values in $\{ 0.7,1.0\}$. Best hyperparameters values are found in Table \ref{tab:miniw_values}.

\subsubsection{Option visualization}

Once again we share results on the options learned by our MOC algorithm and the OC algorithm. These results are available in the form of videos in the following \href{https://drive.google.com/drive/folders/1A-P5vJvjKBfi1FvNIZ5epImlJKYvUTDa?usp=sharing}{URL}.

\subsection{Proof of Lemma 1}



\ergo*
\begin{proof}
The limit in Equation \ref{limit} exists if and only if the Markov chain on the augmented state-option space is ergodic. To ensure ergodicity, we first need to verify irreducibility, that is, whether it is possible to get from any state-option pair $(s,o)$ to any other pair $(s',o')$ in a finite amount of time. Concretely, there exist a positive integer $n$ such that,
$$  [(P_{\pi,\mu,\beta})^n]_{(s,o),(s',o')} > 0 $$
As we consider a finite augmented space, positive recurrrence is implied by irreducibility. Secondly, we need to verify that the Markov chain on the augmented space also satisfies aperiodicity.

To verify both criteria, we will use the following expansion on the state-option transition function,
\begin{align}
    P_{\pi,\mu,\beta}(s',o' | s,o) &= p_{\mu,\beta}(o'|s',o) \sum_a P(s'|s,a) \pi(a|s,o) \\
    &=  \big( \beta(s',o) \mu(o'|s') + (1-\beta(s',o)) \delta_{o,o'} \big) P^{\pi_o}(s'|s) 
    \label{exp_def}
\end{align}
For each option, Assumption \ref{ergod} verifies that $P^{\pi_o}(s'|s)$ is ergodic. Moreover, Assumption \ref{positive} implies that both the termination functions $\{ \beta \}$ and the policy over options $\mu$ have strictly positive values. As per the definition in Equation \ref{exp_def}, both assumptions lead to satisfying the irreducibility and aperiodicity criteria and therefore the chain on the augmented space is ergodic. We can then write,
\begin{align*}
  \lim_{t \rightarrow \infty} P_{\pi,\mu,\beta}(S_t=s,O_t=o)  &= \sum_{\bar{o}} \lim_{t \rightarrow \infty} P_{\pi,\mu,\beta}(S_t=s,O_{t-1}=\bar{o}) \\ &\big(  (1-\beta(s,\bar{o})) \delta_{o=\bar{o}} + \beta(s,\bar{o}) \mu(o|s) \big) \\
  &= \sum_{\bar{o}} \bar{d}_{\pi,\mu,\beta}(\bar{o},s)  \big( (1-\beta(s,\bar{o}) \delta_{o=\bar{o}} + \beta(s,\bar{o}) \mu(o|s) \big)
\end{align*}

\end{proof}

\end{document}